\DeclarePairedDelimiter{\norm}{\lVert}{\rVert}
\theoremstyle{plain}%\theoremstyle{thmstyleone}%
\newtheorem{theorem}{Theorem}
\newtheorem{proposition}[theorem]{Proposition}% 
\newtheorem{definition}[theorem]{Definition}
\newtheorem{lemma}[theorem]{Lemma}
\newtheorem{remark}[theorem]{Remark}
\newtheorem{example}[theorem]{Example}
\title{Robust inference using density-powered Stein operators}
\author{Shinto Eguchi\thanks{The Institute of Statistical Mathematics, Japan  \;  ( {\tt eguchi@ism.ac.jp} )}}
\date{}
\begin{document}

\maketitle

\begin{abstract}
 {We introduce a density-power weighted variant of the Stein operator, called the $\gamma$-Stein operator, for robust inference with unnormalized probability models. The operator is motivated by the first variation of the $\gamma$-divergence under infinitesimal escort transport and weights the usual Stein field by a positive power of the model density. This weighting down-weights observations in low model-density regions, providing a principled robustness mechanism while retaining the normalizing-constant-free structure of score matching. We develop the resulting $\gamma$-score matching estimating equations and discuss their non-integrable, generalized-method-of-moments character. We further study two extensions: a $\gamma$-kernelized Stein discrepancy, interpreted as a robust diagnostic or contaminated-null goodness-of-fit procedure, and $\gamma$-Stein variational gradient descent for robust posterior approximation. Numerical examples on directional, mixture, and quartic-potential models illustrate the robustness--efficiency trade-off: positive $\gamma$ can stabilize inference under targeted contamination, whereas $\gamma=0$ remains preferable under clean well-specified models.}

\end{abstract}

\bigskip
\noindent
{\bf keywords}: {$\gamma$-divergence, goodness-of-fit, score matching, Stein discrepancy, Stein variational gradient descent}

\setcitestyle{authoryear,open={(},close={)}}

\newpage
\section{Introduction}

The theory of optimal transport has attracted broad attention not only in mathematics but also in statistics, machine learning, and artificial intelligence \citep{Villani2003, PeyreCuturi2019}. 
Recently, the concept of transport geometry has been explored within the framework of information geometry, aiming to incorporate data-space geometry via spatial gradient, Laplacian, and related differential operators  \citep{li2018information, mallasto2020entropy, ay2024information, cheng2023some}. 
 {These approaches incorporate data-space geometry into divergence-based methods
through spatial gradients, Laplacians, and related differential operators.}

Parameter estimation is a central task in statistics and machine learning. The most common method, 
 {MLE is statistically efficient under correct specification, but it can be
sensitive to outliers and data contamination.}
Score matching is efficient for a situation where the normalizing constant is intractable, or prohibitively expensive to compute \citep{hyvarinen2005estimation, lyu2012interpretation}.

We address this challenge by developing a robust estimation framework grounded in transport-based information geometry.
 {We introduce a probability-weighted operator, called the $\gamma$-Stein operator. It satisfies a weighted Stein identity and is also connected to the first variation of the $\gamma$-divergence under infinitesimal transport. These two roles should be distinguished: the Stein identity supplies unbiased estimating equations, whereas the variational identity explains why the weighting is naturally associated with the scale-invariant $\gamma$-divergence.}
 {The resulting $\gamma$-score matching estimating equation inherits the normalizing-constant-free property of ordinary score matching because the unknown partition function appears only as a common multiplicative factor. At the same time, the factor $p_\theta(x)^\gamma$ down-weights observations assigned small model density, giving robustness against gross contamination at the possible cost of efficiency under clean data.}
 {We demonstrate this robustness--efficiency trade-off through numerical experiments rather than claiming uniform dominance over likelihood-based competitors. We then extend the framework to a Reproducing Kernel Hilbert Space (RKHS), in which the proposed Stein operator induces a probability-weighted Stein discrepancy. This extension yields a robust diagnostic version of kernelized goodness-of-fit testing and a robust particle-based variational inference algorithm.}
Our work builds upon several lines of research in divergence measures and their applications in statistical inference. Classical Fisher divergence has been studied extensively for parameter estimation and model assessment \citep{li2020fisher, anastasiou2023stein}.
Extensions like score matching and Stein discrepancies 
\citep{liu2016stein, gorham2017measuring, matsubara2022robust} have provided tools for likelihood-free inference and robust learning.
Our approach is built upon the $\gamma$-divergence, a family of information-theoretic measures, such as $\beta$-divergence and $\gamma$-divergence, have been proposed to improve robustness against model misspecification \citep{basu1998robust, fujisawa2008robust, cichocki2010families}. 
{\color{black} A density-weighted Stein operator of the type studied here has also been
employed in generative modeling, where it induces an implicit geometric
regularization in flow matching \citep{eguchi2026densityweighted}.  The present
paper is concerned instead with statistical inference for unnormalized
models, namely parameter estimation, goodness-of-fit diagnostics, and
particle-based posterior approximation.}

The paper is organized as follows:  Section 2 introduces the theoretical foundation of our method, deriving the $\gamma$-Stein operator.
 {Section 2 also separates two related but different facts: the weighted Stein identity used for estimation and the escort-transport variation identity for the $\gamma$-divergence.}
Section 3 presents the resulting $\gamma$-score matching estimator, discusses statistical properties, and presents illustrative examples.
In Section 4 we extend this to construct our robust goodness-of-fit diagnostic and variational inference algorithm.  Finally, Section 5 discusses the implications of our findings and outlines future research directions.

\section{Stein operators}
We first review the standard Stein operator, showing how its fundamental zero-expectation identity provides a direct link to the Stein discrepancy and the Fisher divergence.  
We begin with general notation and admissibility conditions, and then state the operator definition and its key properties.
The score matching estimation method is naturally introduced based on the foundation of these notions. 
Secondly, our proposed $\gamma$-Stein operator as a weighted generalization designed for robustness is introduced. 
This establishes the new operator's fundamental connection to the $\gamma$-divergence and demonstrates how it provides a principled, robust estimation framework that, like standard score matching, remains independent of intractable normalizing constants. 

\subsection{Standard Stein operator}\label{sub1}

In statistics and machine learning, a fundamental challenge is to determine how `close' two probability distributions are. Stein's method provides a framework for this task. The central idea is to define a special mathematical tool, called Stein operator, that is uniquely associated with a specific probability distribution. This operator allows us to not only measure the difference between distributions but also to build effective statistical methods.

Before proceeding, we fix some notation and basic regularity.
Throughout, $p$ and $q$ denote smooth, strictly positive densities on $\mathbb{R}^{d}$ (possibly unnormalized) with gradients $s_{p}(x)=\nabla_x \log p(x)$ and $s_{q}(x)=\nabla_x\log q(x)$, where $\nabla_x$ denotes the gradient with respect to $x$.
Test functions $f:\mathbb{R}^{d}\to\mathbb{R}^{d}$ are assumed to be sufficiently smooth and integrable so that all derivatives and integration-by-parts identities below are valid and no boundary terms arise.

\begin{definition}[Stein operator]
The Stein operator $\mathcal A_p$, associated with the density $p$, acts on a suitable vector field $f(x)$ and is defined as:
\[
\mathcal{A}_p f(x) = \langle s_p(x), f(x)\rangle + \nabla_x \cdot f(x),
\]
where $\langle \;,\;\rangle$ is the Euclidean inner product, and $\nabla_x\cdot f(x)$ is the (geometric) divergence of $f$.
\end{definition}
The classical Stein operator satisfies the Stein identity: its expected value is zero if and only if the expectation is taken with respect to its own density $p$.
\begin{align}\nonumber
\mathbb{E}_{X \sim p}[\mathcal{A}_p f(X)] = 0.
\end{align} 
This identity holds for any sufficiently smooth function $f$ that vanishes at the boundaries of the support of $p$. It essentially means the operator $\mathcal{A}_p$ acts as a detector for the density $p$.
We can leverage this property to measure the difference between $p$ and another density $q$. If we apply the operator for $q$, $\mathcal{A}_q$, but take the expectation under $p$, the result will generally be non-zero unless $p=q$. This gives rise to the Stein discrepancy.
The Stein discrepancy between densities $p$ and $q$ over a class of functions $\mathcal{F}$ is defined by:
\begin{align}\nonumber
S(p\|q |\, \mathcal{F}) = \sup_{f \in \mathcal{F}} \left( \mathbb{E}_{X \sim p}[\mathcal{A}_q f(X)] \right)^2.
\end{align} 
By choosing different function classes $\mathcal{F}$, we can generate different types of (information) divergences.

One example is the Fisher divergence, defined as the expected squared norm of the difference between the score functions:
\begin{align}\nonumber
D_{\text{F}}(p\|q) = \mathbb{E}_{X \sim p}[\,\|s_p(X) - s_q(X)\|^2].
\end{align} 
The Fisher divergence is a special case of the Stein discrepancy when the function class $\mathcal{F}$ is the unit ball $\mathcal{B}$ in the space $L^2(p)$. To see this, we can rewrite the expectation term in the Stein discrepancy. Using integration by parts (which is the source of the Stein identity), we find:
\begin{align}\nonumber
 {\mathbb{E}_{X \sim p}[\mathcal{A}_q f(X)] = \mathbb{E}_{X \sim p}[\langle s_q(X) - s_p(X), f(X)\rangle] = \langle s_q - s_p, f \rangle_{L^2(p)}.}
\end{align} 
 {By the Cauchy--Schwarz inequality, maximizing the absolute value of this inner product over all functions $f$ in the unit ball $\mathcal{B}$ gives precisely the $L^2(p)$ norm of $s_q-s_p$, and squaring it yields the Fisher divergence. Thus, $S(p\| q | \mathcal{B}) = D_{\text{F}}(p\|q)$.}

This connection is the key to an elegant estimation technique called score matching \citep{hyvarinen2005estimation, vincent2011connection}. 
The goal of score matching is to fit a parametric model $p_\theta(x)$ to a data-generating density $p(x)$ by minimizing the Fisher divergence $D_{\text{F}}(p\| p_\theta)$.
A major challenge is that $D_{\mathrm{F}}$ depends on the unknown score $s_p$. However, thanks to the Stein identity, the objective function can be simplified to a form that only depends on the model's score $s_{\theta}=\nabla_x \log p_{\theta}(x)$ and the data:
\begin{align}\nonumber
D_{\text{F}}(p\| p_\theta) = \mathbb{E}_{X \sim p}[\|s_{\theta}(X)\|^2 + 2\nabla_x \cdot s_{\theta}(X)] + \text{const.}
\end{align} 
where the constant term depends only on $p$ and not on the parameter $\theta$. To estimate $\theta$ from a dataset $\{x_i\}_{i=1}^n$, we minimize the corresponding empirical objective function:
\begin{align}\nonumber
L_n(\theta) = \frac1n\sum_{i=1}^n \left\{ \|s_{\theta}(x_i)\|^2 + 2\nabla_x \cdot s_{\theta}(x_i) \right\}.
\end{align} 
The optimal parameter $\hat{\theta}$ is found by solving the estimating equation $\mathcal E_n(\theta)=0$, where
$\mathcal E_n(\theta)=\nabla_\theta L_n(\theta)$ is given by
\begin{align}\label{EstEqu}
\mathcal{E}_n(\theta)=\frac2n\sum_{i=1}^n \{\langle s_{\theta}(x_i),\nabla_\theta s_{\theta}(x_i)\rangle+\nabla_x\cdot \nabla_\theta s_{\theta}(x_i)\},
\end{align}
where $\nabla_\theta$ is the parameter gradient.
The validity of this approach is confirmed by Stein's method. 
The term inside the summation is proportional to the standard Stein operator $\mathcal{A}_{p_\theta}$ applied to the vector field $f(x) = \nabla_\theta s_{\theta}(x)$. 
The Stein identity yields  the expectation of this operator is zero under the model's own density, i.e., $\mathbb{E}_{X\sim p_{\theta}}[\mathcal{A}_{p_\theta}f(X)]=0$. 
This ensures that the estimating function is unbiased at the true parameter value, making it a sound basis for estimation.

A useful feature of score matching is that it is free from the normalizing constant problem.
The score $s_{\theta}(x) = \nabla_x \log p_\theta(x)$ involves the log of the density. If $p_\theta(x) = \frac{1}{Z_\theta}\tilde{p}_\theta(x)$, the normalizing constant $Z_\theta$ becomes an additive term $\log Z_\theta$ inside the logarithm. Since the gradient $\nabla_x$ is taken with respect to $x$, this term vanishes, allowing us to perform the entire estimation without ever needing to compute the often-intractable $Z_\theta$.

\subsection{\texorpdfstring{$\gamma$}{gamma}-Stein operator}\label{sub22}

We introduce a weighted version of the Stein operator to build a more robust estimation framework. 
This operator is designed to systematically down-weight the influence of outlier data points, a property achieved by introducing a power-law weight based on the probability density function itself.

\begin{definition}[$\gamma$-Stein Operator]
For a density function $p(x)$ and a tuning parameter $\gamma\geq0$, the  {$\gamma$-Stein operator} for a vector field $f$ is defined by
\[
\mathcal{A}_{p}^{(\gamma)}f(x) \coloneqq  p(x)^\gamma \left\{(\gamma+1) \langle s_{p}(x), f(x)\rangle  
+  \nabla_x\cdot f(x) \right\}.
\]
\end{definition}
When $\gamma=0$, this operator reduces to the classical Stein operator, ${\mathcal A}_p^{(0)}={\mathcal A}_p$.
Importantly, it satisfies a corresponding $\gamma$-Stein identity for an admissible test function $f$, as integration by parts shows that its expectation under $p$ vanishes:
\begin{align}\nonumber
\mathbb{E}_{X\!\sim p}[\mathcal{A}_p^{(\gamma)} f(X)] = \int \{(\gamma+1)p^{\gamma+1}\langle s_p,  f\rangle -
\langle\nabla_x p^{\gamma+1},f\rangle \}dx =0.
\end{align}
We work under routine smoothness and `no edge-effects' conditions so that the calculus steps used below are valid; these are satisfied by the models and kernels considered in our examples.
The $\gamma$-Stein operator provides a powerful dynamical perspective on particle-based inference, extending the intuition from the standard Stein operator. 
 It is worthwhile to note an alternative expression for the $\gamma$-Stein operator:
\[
\mathcal{A}_p^{(\gamma)}f(x) = \frac{\nabla \cdot (p(x)^{\gamma+1} f(x))}{p(x)}.
\]
This definition clarifies the operator's structure as a normalized divergence. Its utility is immediately apparent in proving the $\gamma$-Stein identity $\mathbb{E}_{p}[\mathcal{A}_{p}^{(\gamma)}f] = 0$, as the expectation simplifies to $\int \nabla \cdot (p^{\gamma+1} f) dx$, which is zero by integration by parts under the assumed boundary conditions

 We denote by $\mathcal{F}_p^{(\gamma)}$ the collection of all test functions $f$ for which the weighted Stein operator $\mathcal{A}_p^{(\gamma)}$ is well-defined and all integration-by-parts identities hold without boundary contributions.
This class enlarges the standard Stein class: when $\gamma=0$ it reduces to the usual Stein set $\mathcal{F}_p^{(0)}$, and for $\gamma>0$ we have the inclusion $\mathcal{F}_p^{(0)}\subseteq \mathcal{F}_p^{(\gamma)}$.
 {Here the inclusion is meant under the standing tail and boundary assumptions used throughout the paper. If $f\in\mathcal{F}_p^{(0)}$, then $p f$ has no boundary contribution. Multiplication by $p^\gamma$ gives the boundary term $p^{\gamma+1}f$, which decays at least as fast as $p f$ for the positive smooth densities considered in our examples. Thus the weighted integration-by-parts identity remains valid. For distributions with compact support or nonstandard boundary behavior, this inclusion should be checked as part of the admissibility condition.}
To understand this, we first consider the two components of the operator it is built upon:
\begin{itemize}
\item An optimization term, $\langle s_p, f \rangle $, which directs particles to move ``uphill" on the log-probability surface towards regions of higher density.
\item A repulsive term, $\nabla_x \cdot f(x)$, which contributes to particle interactions. It acts as a repulsive force that encourages the ensemble of particles to spread out and cover the full distribution, preventing a collapse to a single mode.
\end{itemize}
Our $\gamma$-Stein operator modulates the influence of these two forces with the weighting factor $p(x)^\gamma$. This creates an adaptive dynamic:
\begin{itemize}
\item In high-density regions (where $p(x)$ is large), the $p(x)^\gamma$ factor amplifies the effect of the operator. Both the uphill force and the repulsive force are strong, so that particles tend to be more strongly affected by the operator in regions
where the target density is large.
\item In low-density regions (where $p(x)$ is small, such as the location of an outlier), the $p(x)^\gamma$ factor \emph{suppresses} the entire operator. Consequently, an outlier particle exerts a much weaker repulsive force on other 'good' particles and experiences a weaker pull towards the modes.
\end{itemize}
 {This suppression is one source of robustness, since an outlying particle has a
reduced influence on the resulting update.}
The following theorem provides a key insight, establishing a structure parallel to the classical case.

\begin{theorem}%[The $\gamma$-Stein Discrepancy] 
\label{Stein}
Let $\mu_{\gamma}(dx) \coloneqq p(x)q(x)^{\gamma}dx$ be a mixed weighting measure. The expectation of the $\gamma$-Stein operator under $p$ is the inner product of the score difference with $f$:
\[
\mathbb{E}_{p}\left[\mathcal{A}_{q}^{(\gamma)}f\right] = \left\langle s_{q}-s_{p}, f\right\rangle_{L^{2}(\mu_\gamma)}.
\]
\end{theorem}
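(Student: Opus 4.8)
The plan is to verify the identity by a direct computation that leverages the normalized-divergence form of the operator recorded just above the theorem, namely $\mathcal{A}_{q}^{(\gamma)}f = \nabla\cdot(q^{\gamma+1}f)/q$. First I would write the expectation as an integral against $p\,dx$ and arrange the integrand as the scalar field $p/q$ multiplying a single divergence,
\[
\mathbb{E}_{p}\big[\mathcal{A}_{q}^{(\gamma)}f\big]
= \int p(x)\,\frac{\nabla\cdot\big(q(x)^{\gamma+1}f(x)\big)}{q(x)}\,dx
= \int \frac{p(x)}{q(x)}\,\nabla\cdot\big(q(x)^{\gamma+1}f(x)\big)\,dx .
\]

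The essential step is one integration by parts, transferring the derivative off the vector field and onto the scalar $p/q$. Under the admissibility conditions for $f\in\mathcal{F}_{q}^{(\gamma)}$ the boundary contribution vanishes, leaving
\[
\mathbb{E}_{p}\big[\mathcal{A}_{q}^{(\gamma)}f\big]
= -\int \big\langle \nabla(p/q),\, q^{\gamma+1}f\big\rangle\,dx .
\]
I would then evaluate the gradient of the ratio using the log-derivative identities $\nabla p = p\,s_{p}$ and $\nabla q = q\,s_{q}$, which give $\nabla(p/q) = p\,(s_{p}-s_{q})/q$. Substituting this cancels the factor $1/q$ against one power of $q$, collapses the weight to $p\,q^{\gamma}$, and yields
\[
\mathbb{E}_{p}\big[\mathcal{A}_{q}^{(\gamma)}f\big]
= -\int p\,q^{\gamma}\,\langle s_{p}-s_{q},\,f\rangle\,dx
= \int p\,q^{\gamma}\,\langle s_{q}-s_{p},\,f\rangle\,dx ,
\]
which is precisely $\langle s_{q}-s_{p},f\rangle_{L^{2}(\mu_{\gamma})}$ by the definition $\mu_{\gamma}(dx)=p\,q^{\gamma}\,dx$.

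As an alternative that does not invoke the divergence form, I could expand the original definition and integrate only the term $\int p\,q^{\gamma}\,(\nabla\cdot f)\,dx$ by parts; this produces $-\int \langle \nabla(p\,q^{\gamma}),f\rangle\,dx$ with $\nabla(p\,q^{\gamma}) = p\,q^{\gamma}\,(s_{p}+\gamma s_{q})$, and the resulting $-(s_{p}+\gamma s_{q})$ combines with the $(\gamma+1)s_{q}$ term to leave exactly $s_{q}-s_{p}$. Either route reduces to a single integration by parts together with elementary log-derivative bookkeeping, so I do not anticipate a genuine obstacle; the only point demanding care is the vanishing of the boundary term, which is guaranteed by the membership $f\in\mathcal{F}_{q}^{(\gamma)}$ and the stated no-edge-effects assumptions.
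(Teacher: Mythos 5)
Your proposal is correct, and your primary route is organized differently from the paper's proof. The paper works directly from the definition: it expands $\mathbb{E}_{p}[\mathcal{A}_{q}^{(\gamma)}f]=\int p\,q^{\gamma}\{(\gamma+1)\langle s_{q},f\rangle+\nabla_{x}\cdot f\}\,dx$, integrates the divergence term by parts to produce $-\int\langle\nabla_{x}(p\,q^{\gamma}),f\rangle\,dx$, expands $\nabla_{x}(p\,q^{\gamma})=p\,q^{\gamma}(s_{p}+\gamma s_{q})$, and cancels coefficients to leave $\langle s_{q}-s_{p},f\rangle$ under $p\,q^{\gamma}\,dx$ --- this is precisely the ``alternative'' you sketch in your final paragraph, so that route coincides with the paper's. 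Your main argument instead exploits the normalized-divergence form $\mathcal{A}_{q}^{(\gamma)}f=\nabla\cdot(q^{\gamma+1}f)/q$ (which the paper records but never uses in this proof), transfers the derivative onto the scalar $p/q$, and uses $\nabla(p/q)=p\,(s_{p}-s_{q})/q$. The two computations are equivalent in substance --- one integration by parts plus log-derivative bookkeeping, with the same boundary term $p\,q^{\gamma}f$ needing to vanish --- but your packaging has a genuine advantage: the cancellation of the $\gamma$-dependent coefficients is automatic rather than tracked by hand, and the $\gamma$-Stein identity $\mathbb{E}_{p}[\mathcal{A}_{p}^{(\gamma)}f]=0$ falls out transparently from $\nabla(p/p)=0$. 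One small point of care: the boundary-vanishing condition here involves the mixed weight $p\,q^{\gamma}f$, not just $q$-dependent quantities, so membership in the Stein class for $q$ alone does not literally guarantee it; like the paper, you should appeal to the blanket no-edge-effects assumption covering the pair $(p,q)$.
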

\begin{proof}
By definition, the expectation is 
\begin{align}\nonumber
\mathbb{E}_{X\!\sim p}[\mathcal{A}_{q}^{(\gamma)}f(X)] = \int p(x) q(x)^\gamma \{(\gamma+1)\langle s_q, f\rangle + \nabla_x \cdot f\} dx.
\end{align}  
Using integration by parts on the second term gives:
    \begin{align}\nonumber
\int p q^\gamma \nabla_x \cdot f \,dx = - \int \langle \nabla_x(p q^\gamma), f \rangle dx.
\end{align} 
    We can expand the gradient term 
$\nabla_x(p q^\gamma) = q^\gamma (\nabla_x p) + p (\gamma q^{\gamma-1} \nabla_x q) = q^\gamma p s_p + \gamma p q^\gamma s_q$. Substituting this back leads to:
    \begin{align}\nonumber
\mathbb{E}_{X\sim p}[\mathcal{A}_{q}^{(\gamma)}f(X)] = \int p q^\gamma \{(\gamma+1)\langle s_q ,  f\rangle  - \langle s_p ,  f \rangle - \gamma \langle s_q,  f\rangle  \} dx = \int p q^\gamma \langle s_q - s_p, f \rangle dx
\end{align} 
    which gives the desired result.

\end{proof}
We define the $\gamma$-Stein discrepancy:
\[
S^{(\gamma)}(p,q;{\cal F})=
\sup_{f\in \mathcal{F}} (\mathbb{E}_{X\!\sim p}[\mathcal{A}_q^{(\gamma)} f(X)])^2,
\]
analogous to the Stein discrepancy. 
This theorem naturally suggests a weighted variant of Fisher divergence.   
Indeed, by taking  the  unit ball $\mathcal{B}_\gamma$ in $L^2(\mu_\gamma)$ in place of $\mathcal F$, the $\gamma$-Stein discrepancy reduces to 
\begin{align}\label{emp}
D_{\mathrm F}^{(\gamma)}(p\|q)= 
\mathbb E_{p}[q^{\gamma}\norm{s_{q}-s_{p}}^{2}].
\end{align}
This may be an elegant extension of the Fisher divergence, but is unsuitable for the statistical application for the score matching.
If we consider fitting a model $q=p_\theta$ to data from $p$, the objective function contains the term 
$\mathbb E_p[p_\theta^\gamma \|s_p\|^2]$.
This term depends on $s_p=\nabla_x \log p$, the score of the true (and unknown) data-generating distribution. Since this term cannot be calculated from data samples alone, we cannot construct a simple empirical objective function.
 {This observation explains why the weighted Fisher divergence in \eqref{emp} is not directly used as the empirical score-matching objective. The practical estimator in Section~\ref{sub31} will instead be built from the zero-expectation $\gamma$-Stein identity. Before doing so, we clarify the information-geometric origin of the same operator by studying the first variation of the $\gamma$-divergence under an infinitesimal transport of the model density.}

Let $v\colon\mathbb{R}^d\to\mathbb{R}^d$ be a vector field with suitable regularity and $\int \!\|v\|^2 q < \infty$. 
Define the infinitesimal transport map $T_{\varepsilon}(x)=x+\varepsilon v(x)$ and denote by $q_{\varepsilon}=T_{\varepsilon}{}_{\#}q$ its push-forward: 
\begin{align}\nonumber
q_{\varepsilon}(x)=q\bigl(x-\varepsilon v(x)\bigr)\,\det\bigl(\mathrm I_d-\varepsilon\nabla_x v(x)\bigr).
\end{align} 
up to the first-order of $\varepsilon$.
Consider the KL-divergence, $D_{\mathrm{KL}}(p\|q).$ 
Its G\^{a}teaux  derivative along this transport path is:
{ 
\begin{align}\nonumber
 \frac{d}{d\varepsilon} D_{\mathrm{KL}}(p\|q_\varepsilon)\Big|_{\varepsilon=0} =& \langle s_q-s_p,v \rangle_{L^2(p)}\\[3pt]\nonumber
=& \mathbb E_{X\!\sim p}[{\mathcal A}_q v(X)].
\end{align}
}
 {This follows from $\dot q_{\varepsilon}(x)|_{\varepsilon=0} = -\nabla_x\cdot(q v)(x)$. In particular, when $q=p$, the right-hand side becomes $\mathbb E_p[\mathcal A_p v]=0$, which is precisely the ordinary Stein identity.}

 {We now show how the same operator arises from the logarithmic
$\gamma$-divergence of \citet{fujisawa2008robust}.  We use the scale-invariant form}
{ 
\begin{align}\label{eq:log-gamma-div}
D_\gamma(p\|q)
=\frac{1}{\gamma(\gamma+1)}\log\int p^{\gamma+1}dx
 +\frac{1}{\gamma+1}\log\int q^{\gamma+1}dx
 -\frac{1}{\gamma}\log\int p q^\gamma dx .
\end{align}
}
{  This logarithmic form is invariant under $q\mapsto cq$ and converges to the
Kullback--Leibler divergence as $\gamma\downarrow0$.  Define}
{ 
\begin{align}\label{escorts}
I_\gamma(p,q)=\int p(x)q(x)^\gamma dx,
\qquad
J_{\gamma+1}(q)=\int q(x)^{\gamma+1}dx.
\end{align}}
{ The logarithmic
$\gamma$-divergence is naturally defined on the projective density class
$[q]=\{cq:c>0\}$.  The push-forward is compatible with this quotient since
$T_{\varepsilon\#}(cq)=cT_{\varepsilon\#}q$.}

 {To obtain the \(\gamma\)-Stein operator itself directly as a first variation,
we transport the \((\gamma+1)\)-escort density \(q^{\gamma+1}\) rather than
\(q\) itself.}

\begin{definition}[{ {Escort transport}}]\label{def:escort-transport}
{  Let $T_\varepsilon(x)=x+\varepsilon v(x)$.  The escort transport of $q$ along
$v$ is the path $q_\varepsilon^{\mathrm{esc}}$ defined by}
\begin{align}\label{eq:escort-transport}
{ \bigl(q_\varepsilon^{\mathrm{esc}}\bigr)^{\gamma+1}
=T_{\varepsilon\#}\bigl(q^{\gamma+1}\bigr).}
\end{align}
\end{definition}
{  The continuity equation for $q^{\gamma+1}$ gives}
\begin{align}\label{eq:escort-generator}
{ 
\dot q_0^{\mathrm{esc}}(x)
:=\left.\frac{d}{d\varepsilon}q_\varepsilon^{\mathrm{esc}}(x)\right|_{\varepsilon=0}
=-\frac{1}{(\gamma+1)q(x)^\gamma}
\nabla\!\cdot\!\bigl(q(x)^{\gamma+1}v(x)\bigr).}
\end{align}
 {Since
$\mathcal A_q^{(\gamma)}v=\nabla\!\cdot(q^{\gamma+1}v)/q$, this is equivalently}
{ 
\begin{align}\label{eq:escort-generator-Astein}
\dot q_0^{\mathrm{esc}}(x)
=-\frac{q(x)^{1-\gamma}}{\gamma+1}\,
\mathcal A_q^{(\gamma)}v(x).
\end{align}
}
 {Thus the escort transport is the coordinate system in which the
$\gamma$-Stein operator directly generates the variation.}

\begin{proposition}[ {Escort transport and the logarithmic $\gamma$-divergence}]
\label{gamma-div}
 {Let $q_\varepsilon^{\mathrm{esc}}$ be defined by
\eqref{eq:escort-transport}.  Then}

\begin{align}\label{eq:log-gamma-variation}
{  \left.\frac{d}{d\varepsilon}
D_\gamma\bigl(p\|q_\varepsilon^{\mathrm{esc}}\bigr)
\right|_{\varepsilon=0}
=
\frac{1}
{\gamma+1}
\frac{\mathbb E_p[\mathcal A_q^{(\gamma)}v]}
{I_\gamma(p,q)}.}
\end{align}
\end{proposition}

\begin{proof}
 {By the definition of escort transport,}
{ 
\begin{align*}
\left.\frac{d}{d\varepsilon}
\bigl(q_\varepsilon^{\mathrm{esc}}\bigr)^{\gamma+1}
\right|_{\varepsilon=0}
=-\nabla\!\cdot\!\bigl(q^{\gamma+1}v\bigr).
\end{align*}
}
 {This proves \eqref{eq:escort-generator} and
\eqref{eq:escort-generator-Astein}.  In $D_\gamma(p\|q)$, the first term is
independent of $q$, while the second term is constant along the escort
transport because}
{ 
\begin{align*}
\int \bigl(q_\varepsilon^{\mathrm{esc}}\bigr)^{\gamma+1}dx
=\int q^{\gamma+1}dx .
\end{align*}
}
 {Thus only $I_\gamma(p,q)=\int p q^\gamma dx$ contributes.  Using
\eqref{eq:escort-generator-Astein},}
{ 
\begin{align*}
\left.\frac{d}{d\varepsilon} I_\gamma(p,q_\varepsilon^{\mathrm{esc}})
\right|_{\varepsilon=0}
&=\gamma\int p q^{\gamma-1}\dot q_0^{\mathrm{esc}}dx  \\
&=-\frac{\gamma}{\gamma+1}
\int p\,\mathcal A_q^{(\gamma)}v\,dx.
\end{align*}
}
 {Substitution into the derivative of
$-(1/\gamma)\log I_\gamma(p,q_\varepsilon^{\mathrm{esc}})$ gives
\eqref{eq:log-gamma-variation}.}
\end{proof}

 {For comparison, we also record the corresponding calculation for the
density-power, or $\beta$-, divergence.  Although the notation is parallel,
$D_\beta$ below is not the same functional as the scale-invariant
$D_\gamma$ above after merely setting $\beta=\gamma$.}
Similarly, the $\beta$-divergence \citep{basu1998robust, mihoko2002robust, cichocki2010families},
\[
D_\beta(p\|q)=\frac{1}{\beta(\beta+1)}\int p^{\beta+1}dx+\frac{1}{\beta+1}\int q^{\beta+1}dx-\frac{1}{\beta}\int p q^{\beta}dx,
\]
essentially suggests the first variation is equal to the expected
$\beta$-Stein operator.

{ 
\begin{proposition}[First variation of the \(\beta\)-divergence under escort transport]
Let \(q_\varepsilon^{\mathrm{esc}}\) be the escort transport of \(q\) along \(v\),
defined by
\[
\bigl(q_\varepsilon^{\mathrm{esc}}\bigr)^{\beta+1}
=
T_{\varepsilon\#}\bigl(q^{\beta+1}\bigr).
\]
Then
\begin{align}
\left.
\frac{d}{d\varepsilon}
D_\beta\!\bigl(p\|q_\varepsilon^{\mathrm{esc}}\bigr)
\right|_{\varepsilon=0}
=
\frac{1}{\beta+1}
\mathbb E_p\!\left[\mathcal A_q^{(\beta)}v\right].
\label{eq:beta-escort-variation}
\end{align}
\end{proposition}

\begin{proof}
The escort transport gives
\[
\dot q_0^{\mathrm{esc}}
=
-\frac{1}{(\beta+1)q^\beta}
\nabla\cdot(q^{\beta+1}v)
=
-\frac{q^{1-\beta}}{\beta+1}
\mathcal A_q^{(\beta)}v .
\]
Since \(\int (q_\varepsilon^{\rm esc})^{\beta+1}dx\) is preserved, only
\(-\beta^{-1}\int p(q_\varepsilon^{\rm esc})^\beta dx\) contributes. Hence
\[
\left.
\frac{d}{d\varepsilon}D_\beta(p\|q_\varepsilon^{\rm esc})
\right|_{\varepsilon=0}
=
-\int p q^{\beta-1}\dot q_0^{\rm esc}dx
=
\frac{1}{\beta+1}
E_p\!\left[A_q^{(\beta)}v\right].
\]
This completes the proof.
\end{proof}

The escort-transport calculation gives a parallel variational representation
for the density-power divergence.  The main difference from the logarithmic
\(\gamma\)-divergence is scale invariance: \(D_\gamma(p\|q)\) is projective in
\(q\), whereas \(D_\beta(p\|q)\) is not.  Hence both divergences lead to
density-powered Stein operators locally, but the logarithmic \(\gamma\)-form is
more naturally aligned with unnormalized models.
}

\begin{remark}[Independence from normalizing constant]\label{intractable}
A key advantage of the $\gamma$-Stein framework is its applicability to unnormalized models. Let a parametric model be specified by its unnormalized density $u_{\theta}(x)$, such that the full probability density is $p_{\theta}(x) = u_{\theta}(x) / Z_{\theta}$, where $Z_{\theta} = \int u_{\theta}(x) dx$ is the often intractable normalizing constant.

The framework's utility rests on two properties.
\begin{enumerate}
\item The score function is independent of $Z_{\theta}$:
    $$s_{\theta}(x) = \nabla_{x} \log p_{\theta}(x) = \nabla_{x} \log(u_{\theta}(x)/Z_{\theta}) = \nabla_{x} \log u_{\theta}(x).$$
This is the well-known property that standard score matching relies on.

\item The normalizing constant cancels from the estimating equation: 
While the score is independent of $Z_{\theta}$, the $\gamma$-Stein operator itself is not, due to the weighting term:
    $$
\mathcal{A}_{p_{\theta}}^{(\gamma)}f(x) = p_{\theta}(x)^{\gamma} \{(\gamma+1)\langle s_{\theta}(x), f(x)\rangle +\nabla_{x}\cdot f(x)\}.
$$
    Here, the weight $p_{\theta}(x)^{\gamma}$ becomes $(u_{\theta}(x)/Z_{\theta})^{\gamma} = u_{\theta}(x)^{\gamma} / Z_{\theta}^{\gamma}$. However, any estimating equation formed by setting the empirical average to zero, $\frac{1}{n}\sum_{i=1}^{n} \mathcal{A}_{p_{\theta}}^{(\gamma)}f(x_i) = 0$, takes the form:
    $$
\frac{1}{n}\sum_{i=1}^{n} \frac{u_{\theta}(x_i)^{\gamma}}{Z_{\theta}^{\gamma}}
 {\{(\gamma+1)\langle \nabla_x \log u_{\theta}(x_i), f(x_i)\rangle +\nabla_{x}\cdot f(x_i)\} = 0.}
$$
    Because $Z_{\theta}^{\gamma}$ is constant in $x_i$, it cancels from the estimating equation, leaving an equation that depends only on $u_{\theta}(x)$.
\end{enumerate}
This cancellation ensures that the entire estimation procedure is free from the need to compute $Z_{\theta}$, preserving the computational advantage of score matching while adding the robustness properties of the $\gamma$-weighting.

\end{remark}

We can generalize the $\gamma$-Stein operator as follows.   By a fixed function  ${w}:[0,\infty)\to[0,\infty) $, the generalized Stein operator is defined as
\[
         {\mathcal A}_q^{({w} )} f =\{{w} (q)+q{w} ^\prime(q)\}\langle s_q,f\rangle + {w} (q)\nabla\cdot f
\]
for a vector field $f$.  If ${w} (q)=q^\gamma$, then the generalized Stein operator reduces to the $\gamma$-Stein operator:
$\mathcal A_q^{({w} )}=\mathcal A_q^{(\gamma)}$.
An argument similar to that in the proof of Theorem \ref{Stein} yields
\[
\mathbb{E}_{p}\left[\mathcal{A}_{q}^{({w} )}f\right] = \left\langle s_{q}-s_{p}, f\right\rangle_{L^{2}(\mu_{w} )},
\]
where $L^{2}(\mu_{w} )$ denotes the $L^2$-space with respect to a measure $\mu_{w} (dx)=p(x){w} (q(x))dx$.
This implies the Stein identity: 
\[
\mathbb{E}_{p}\left[\mathcal{A}_{p}^{({w} )}f\right] =0
\]
for all $f$.
We can consider the generalized Stein discrepancy,
\(
S^{({w} )}(p,q;{\cal F})=
\sup_{f\in \mathcal{F}} (\mathbb{E}_{X\!\sim p}[\mathcal{A}_q^{({w} )} f(X)])^2.
\)
Similarly, by taking  the  unit ball $\mathcal B_{w}$ in $L^2(\mu_{w} )$ as $\mathcal F$, 
 the generalized Stein discrepancy is reduced to a form of weighted Fisher divergence, 
\begin{align}\nonumber
D_{\mathrm F}^{({w} )}(p\|q)= 
\mathbb E_{p}[{w} (q)\norm{s_{q}-s_{p}}^{2}].
\end{align}
If we consider the whole framework to keep working with unnormalized models $q=u/Z$, 
we need the operator (and discrepancy) to be invariant to scaling $q\mapsto cq$. 
%This holds if and only if ${w} $ is homogeneous: ${w} (cq)=c^\gamma {w}  (q)$, which forces ${w} (q)=C q^\gamma$
The ${w} $-family is valuable conceptually, but for unnormalized targets the only scale-invariant choices are essentially the 
$\gamma$-family up to a constant.
The characterization is given by the following proposition.

\begin{proposition}
Let $w:(0,\infty)\to(0,\infty)$ be Borel measurable. For an unnormalized density $q:\Omega\to(0,\infty)$ define the $w$-weighted expectation
\[
E_{q,w}[h]\;=\;\frac{\int_\Omega w(q(x))\,q(x)\,h(x)\,dx}{\int_\Omega w(q(x))\,q(x)\,dx}\,.
\]
Call $w$ \emph{scale-invariant} if for every $c>0$ and every integrable $h$,
\[
E_{c q,w}[h]\;=\;E_{q,w}[h]\,.
\]
Then the following are equivalent:
\begin{itemize}
\item[] {\rm (i)}. $w$ is scale-invariant.

\item[] {\rm (ii)}. There exists $\gamma\in\mathbb{R}$ such that $w(c t)=c^{\gamma} w(t)$ for all $c,t>0$.
\end{itemize}
In particular, $w(t)=K\,t^{\gamma}$ for some $K>0$; i.e., up to a constant factor, $w$ is a power law.
\end{proposition}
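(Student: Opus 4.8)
The plan is to prove the two implications separately, with the substantive content in (i) $\Rightarrow$ (ii). The converse (ii) $\Rightarrow$ (i) is a one-line computation that I would dispose of first: if $w(ct)=c^{\gamma}w(t)$, then replacing $q$ by $cq$ in the definition multiplies the integrand $w(q)\,q$ in the numerator and the normalizer in the denominator by the common factor $c^{\gamma+1}$, which cancels, leaving $E_{cq,w}[h]=E_{q,w}[h]$.

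For (i) $\Rightarrow$ (ii), the first step is to convert scale-invariance, stated as equality of two normalized expectations for all (bounded) $h$, into a pointwise statement. Writing $m_c=\int w(cq)\,cq\,dx$ and $m_1=\int w(q)\,q\,dx$ (finite and positive for admissible $q$), scale-invariance says $\int h(x)\,D(x)\,dx=0$ for every bounded $h$, where $D=\frac{w(cq)cq}{m_c}-\frac{w(q)q}{m_1}\in L^1(dx)$. Taking $h=\operatorname{sgn}(D)$ forces $D=0$ a.e., i.e.
\[
w(cq(x))\,c\,q(x)=\frac{m_c}{m_1}\,w(q(x))\,q(x)\quad\text{a.e.},
\]
and dividing by $c\,q(x)>0$ gives $w(cq(x))=\mu(c)\,w(q(x))$ a.e., with $\mu(c)=m_c/(c\,m_1)$ a constant depending only on $c$ (and on the chosen $q$).

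The key step is to remove the $q$-dependence and separate variables. I would test the identity on a density taking two prescribed values: given $a,b>0$, construct an admissible positive density $q$ equal to $a$ on a set $A$ and to $b$ on a set $B$, both of positive finite measure (a simple function with a controlled positive tail, so that $m_1,m_c$ are finite and positive). Since $q\equiv a$ on $A$ and $A$ has positive measure, the a.e. identity yields the numerical equation $w(ca)=\mu(c)\,w(a)$, and likewise $w(cb)=\mu(c)\,w(b)$ from $B$. Eliminating $\mu(c)$ gives $w(ca)/w(a)=w(cb)/w(b)$ for all $a,b,c>0$, so $\rho(c):=w(ct)/w(t)$ is independent of $t$ and $w(ct)=\rho(c)\,w(t)$ holds identically on $(0,\infty)^2$.

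Finally I would solve this functional equation. Applying it twice shows $\rho$ is multiplicative, $\rho(c_1c_2)=\rho(c_1)\rho(c_2)$, and $\rho(c)=w(c)/w(1)$ is Borel measurable because $w$ is. Passing to logarithms, $g(u):=\log\rho(e^{u})$ satisfies the additive Cauchy equation $g(u+v)=g(u)+g(v)$ and is measurable, so by the classical theorem that measurable solutions of Cauchy's equation are linear, $g(u)=\gamma u$ for some $\gamma\in\mathbb{R}$; equivalently $\rho(c)=c^{\gamma}$. Hence $w(ct)=c^{\gamma}w(t)$, which is (ii), and setting $t=1$ yields $w(t)=K\,t^{\gamma}$ with $K=w(1)>0$. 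I expect the main obstacle to be twofold: exhibiting an admissible two-valued density so that the normalizers are finite and positive (a measure-theoretic technicality handled by choosing $q$ simple with a tame tail), and the essential use of Borel measurability to exclude the pathological non-measurable solutions of Cauchy's equation — without this hypothesis, (i) would not force a power law.
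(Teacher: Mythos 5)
Your proposal is correct and follows essentially the same route as the paper's proof: deduce proportionality of the weighted measures, eliminate the $q$-dependence to obtain the multiplicative relation $w(ct)=\alpha(c)\,w(t)$, and invoke measurability (via Cauchy's functional equation) to conclude $\alpha(c)=c^{\gamma}$. Your version merely fills in details the paper leaves implicit — the $\operatorname{sgn}(D)$ test function and the explicit two-valued density used to separate variables — which strengthens rather than changes the argument.
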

\begin{proof}
{ 
(i)$\Rightarrow$(ii): Fix $c>0$. The identity $E_{c q,w}[h]=E_{q,w}[h]$ for all bounded measurable $h$ implies that the two finite measures
\[
\mu_{cq}(dx)=w(cq(x))\,c q(x)\,dx,
\qquad
\mu_q(dx)=w(q(x))\,q(x)\,dx
\]
have the same normalized version. Hence they are proportional: there exists $k(c,q)>0$ such that
\[
w(cq(x))\,c q(x)=k(c,q)w(q(x))q(x)
\quad\text{a.e.}
\]
Equivalently,
\[
w(cq(x))=\alpha(c,q)w(q(x)),
\qquad \alpha(c,q)=k(c,q)/c.
\]
Since the assertion is required for every positive density $q$, we may choose $q$ taking arbitrary positive values on sets of positive measure. This forces the proportionality factor to depend only on $c$, not on the particular value $t=q(x)$; thus there exists $\alpha(c)>0$ such that
\[
w(ct)=\alpha(c)w(t),\qquad c,t>0.
\]
Applying this identity twice gives
\[
w(c_1c_2t)=\alpha(c_1)w(c_2t)=\alpha(c_1)\alpha(c_2)w(t),
\]
whereas applying it once with $c_1c_2$ gives
\[
w(c_1c_2t)=\alpha(c_1c_2)w(t).
\]
Because $w(t)>0$, it follows that $\alpha(c_1c_2)=\alpha(c_1)\alpha(c_2)$. A positive measurable multiplicative function on $(0,\infty)$ has the power form $\alpha(c)=c^\gamma$ for some $\gamma\in\mathbb{R}$. Hence $w(ct)=c^\gamma w(t)$. Taking $t=1$ gives $w(t)=K t^\gamma$ with $K=w(1)>0$.

(ii)$\Rightarrow$(i): If $w(ct)=c^{\gamma}w(t)$, then
\[
\frac{\int w(c q)\,c q\,h}{\int w(c q)\,c q}
=\frac{c^{\gamma+1}\int w(q)\,q\,h}{c^{\gamma+1}\int w(q)\,q}
=\frac{\int w(q)\,q\,h}{\int w(q)\,q}=E_{q,w}[h]\,.
\]
Thus scale-invariance holds.
} 
\end{proof}

\section{Score matching via \texorpdfstring{$\gamma$}{gamma}-Stein operator}\label{sec3}
We  formally define the $\gamma$-Score Matching Estimator ($\gamma$-SME) based on the $\gamma$-Stein identity, and establishes its key properties: independence from normalizing constants and its non-integrable nature (asymmetric Jacobian). We next demonstrate the method's practical utility and robustness by applying it to models with intractable normalizers, including distributions on the unit sphere (vMF, Fisher-Bingham), normal mixtures, and a quartic potential model. Finally we address the practical choice of the robustness parameter $\gamma$ by introducing a principled, robust cross-validation scheme for selecting $\gamma$.

\subsection{General properties and efficiency}\label{sub31}
 {Standard score matching, corresponding to $\gamma=0$ in our framework, is a
method for fitting models with intractable normalizing constants.  By minimizing
the Fisher divergence, it leads to an objective depending only on the model
score, and hence does not require evaluation of the normalizing constant.
This idea is also related to recent score-based generative modeling \citep{song2019generative, ho2020denoising}.}

 {Section~\ref{sub22} gave two complementary facts. The first 
is the $\gamma$-Stein identity 
$\mathbb{E}_{X\sim p}[\mathcal{A}_{p}^{(\gamma)}f(X)]=0$, 
which holds for admissible test functions and is the basis for estimation.  
The second is Proposition~\ref{gamma-div}, which expresses 
$\mathcal{A}_q^{(\gamma)}$ as the exact first variation of the logarithmic 
$\gamma$-divergence along the escort transport directly.  
Geometrically, the two facts together identify $\mathcal{A}_q^{(\gamma)}$ as 
the canonical Stein operator for the $(\gamma+1)$-escort coordinate of 
$D_\gamma$.  The estimating equation below uses the first fact, applied to 
the test function $f=\nabla_\theta s_\theta$.}

To define a specific estimator, we must choose a test function $f(x)$. A natural choice is the gradient of the model's score function with respect to its parameters, $f(x) = \nabla_\theta s_{\theta}(x)$, which captures how the model's structure changes with $\theta$. Applying the $\gamma$-Stein operator to this function gives our proposed $\gamma$-score matching estimating function:
\begin{align}\nonumber
U_\gamma(\theta,x) =& \mathcal{A}_{p_\theta}^{(\gamma)} \left( \nabla_\theta s_{\theta}(x) \right)
\\[3mm]\nonumber
 =& p_\theta(x)^\gamma \left\{ (\gamma+1) \langle s_{\theta}(x), \nabla_{\theta}s_{\theta}(x) \rangle + \nabla_x \cdot \nabla_{\theta}s_{\theta}(x) \right\},
\end{align}
where $\langle \cdot, \cdot \rangle$ denotes the inner product for each component of the parameter gradient.
For a dataset $\{x_i\}_{i=1}^n$, the $\gamma$-score matching estimator $\hat{\theta}_\gamma$ is 
defined by the value of $\theta$ that solves the estimating equation:
\begin{align}\nonumber
\bar{U}_\gamma(\theta) = \frac{1}{n} \sum_{i=1}^n U_\gamma(\theta, x_i) = 0.
\end{align}
By the $\gamma$-Stein identity, this estimating function is unbiased at the true parameter value, meaning $\mathbb{E}_{X \sim p_\theta}[U_\gamma(\theta,X)] = 0$, which makes it a sound basis for estimation.

A major advantage of this method is its independence from the often-intractable normalizing constant. 
Let the model be $p_\theta(x) = u_\theta(x) / Z_\theta$, where $u_\theta(x)$ is an unnormalized, easy-to-compute density.
\begin{itemize}
\item  The score function is of a tractable form: $s_{\theta}(x) = \nabla_x \log u_\theta(x)$.
\item  The weighting term $p_\theta(x)^\gamma$ becomes $u_\theta(x)^\gamma / Z_\theta^\gamma$.
\end{itemize}
The estimating equation can therefore be written as:
\begin{align}\nonumber
\frac{1}{n} \sum_{i=1}^n u_\theta(x_i)^\gamma \left\{ (\gamma+1) \langle s_{\theta}(x_i), \nabla_{\theta}s_{\theta}(x_i) \rangle + \nabla_x \cdot \nabla_{\theta}s_{\theta}(x_i) \right\} = 0.
\end{align} 
Since we are setting the equation to zero, the $1/Z_\theta^\gamma$ factor can be dropped, and the entire estimation can proceed without ever computing $Z_\theta$.
As discussed in Subsection \ref{sub22}, the choice of a power-law weight $w(p) \propto p^\gamma$ is unique in preserving this feature. For the cancellation to work, the weighting function must satisfy a differential equation whose only non-trivial solution is this power law.

An important property of the proposed estimator arises here. For $\gamma=0$, the estimating function $\bar U_0(\theta)$ is the gradient of the standard score matching objective function. In this case, its Jacobian matrix is symmetric (as it is a Hessian).
However, for $\gamma \ne 0$, the Jacobian matrix is generally \emph{ asymmetric}. This implies that the estimating function $\bar U_\gamma(\theta)$ is non-integrable, that is, there is no scalar objective function $\Phi_\gamma(\theta)$ such that $\bar U_\gamma(\theta) = \nabla_\theta \Phi_\gamma(\theta)$.
Nevertheless, the Jacobian matrix is asymptotically symmetric under correctly specified model $p_\theta$
on account of the following proposition:
\begin{proposition}\label{symmetry}
Let 
\begin{align}\nonumber
J_{\gamma}(\theta) = \mathbb E_{X\!\sim p_\theta}[\nabla_\theta \;U_\gamma (\theta,X)^\top]. 
\end{align}  
Then, $J_{\gamma}(\theta)$ is a symmetric matrix:
\begin{align}\label{sym}
J_{\gamma}(\theta) = \mathbb{E}_{X \sim p_\theta} \left[ p_\theta(X)^\gamma \left\langle \nabla_\theta s_{\theta}(X),\nabla_\theta^\top s_{\theta}(X) \right\rangle \right].
\end{align}

\end{proposition}
\begin{proof}
{ 
For clarity, write the $a$th component of the estimating function as
\[
U_{\gamma,a}(\theta,x)
=p_\theta(x)^\gamma\{(\gamma+1)\langle s_\theta(x),V_a(x)\rangle+\nabla_x\cdot V_a(x)\},
\qquad
V_a(x)=\partial_{\theta_a}s_\theta(x).
\]
Because $\mathbb E_{p_\theta}\{U_{\gamma,a}(\theta,X)\}=0$, differentiating this identity with respect to $\theta_b$ gives the Bartlett-type relation
\[
\mathbb E_{p_\theta}[\partial_{\theta_b}U_{\gamma,a}(\theta,X)]
=-\mathbb E_{p_\theta}[S_b(\theta,X)U_{\gamma,a}(\theta,X)],
\]
where $S_b(\theta,x)=\partial_{\theta_b}\log p_\theta(x)$. Since $\nabla_x S_b(\theta,x)=\partial_{\theta_b}s_\theta(x)=V_b(x)$, integration by parts gives
\begin{align*}
&\mathbb E_{p_\theta}\left[p_\theta^\gamma S_b\{(\gamma+1)\langle s_\theta,V_a\rangle+\nabla_x\cdot V_a\}\right]\\
&\quad =-\mathbb E_{p_\theta}\left[p_\theta^\gamma\langle V_b,V_a\rangle\right].
\end{align*}
Therefore
\[
J_{\gamma,ba}(\theta)
=\mathbb E_{p_\theta}[\partial_{\theta_b}U_{\gamma,a}(\theta,X)]
=\mathbb E_{p_\theta}\left[p_\theta(X)^\gamma\langle V_b(X),V_a(X)\rangle\right].
\]
In matrix form this is exactly \eqref{sym}. The right-hand side is a weighted Gram matrix with a positive sign, and is therefore symmetric.
}
\end{proof}
The Jacobian matrix $\nabla_\theta \bar U_\gamma^\top(\theta)$  almost surely converges to the symmetric matrix $J_\gamma(\theta)$, which implies asymptotic symmetry.   
The simplified expression \eqref{sym} leads that it can be viewed as an expected, weighted {Gramian matrix}.
Hence, the $\gamma$-score matching estimator asymptotically has a unique solution under the assumption such that the components
$\nabla_{\theta_j} s_{\theta}$ are functionally independent.
The large-sample behavior of the $\gamma$-score matching estimator is governed by its Godambe information (or "sandwich" covariance matrix), 
$$
\mathrm{Avar}(\theta) = J_\gamma(\theta)^{-1}V_\gamma(\theta)J_\gamma( \theta)^{-1}, 
$$
where  $V_\gamma(\theta)$ is the covariance matrix of $U_\gamma(\theta,X)$.

While the estimating function $U_\gamma(\theta)$ is in general asymmetric, it does not prevent consistent estimation. Instead, it points us toward the Generalized Method of Moments (GMM) as the natural framework for estimation. GMM is designed precisely for situations with a set of unbiased estimating equations that may not derive from a single objective function.
We can construct a GMM objective function by forming a quadratic form:
\begin{align}\nonumber
L_{\text{GMM}}(\theta) =  \bar U_\gamma(\theta)^\top W_n^{-1} \bar U_\gamma(\theta),
\end{align} 
where $W_n$ is a positive definite weighting matrix.
See \citet{hansen1982large} for the general discussion. Minimizing $L_{\text{GMM}}(\theta)$ yields a consistent and asymptotically normal estimator. 
It might be worth a brief mention that even the simplest choice, $W_n = {\mathrm I}$, yields a consistent estimator by minimizing the squared Euclidean norm $\|\bar U_\gamma(\theta)\|^2$. This provides a direct, practical objective function that generalizes the $\gamma=0$ case (which minimizes the norm of the gradient of the score matching objective).
This framework also provides a systematic way to improve efficiency by incorporating additional moment conditions  into an expanded estimating function, e.g.,
\[
     \bar U_\gamma^{(2)}(\theta)=\frac1n\sum_{i=1}^n \mathcal A_{p_\theta}^{(\gamma)}\begin{pmatrix}
               \nabla_\theta s_{\theta}(x_i)\\[2mm]
\nabla_\theta^{\otimes 2} s_{\theta}(x_i)
\end{pmatrix}.
\]
This principle of augmenting the estimating equations is not limited to second-order derivatives. In theory, one could include an entire family of test functions, leading to a much larger set of moment conditions. This raises the crucial question of optimal selection.
The GMM formalism provides a clear answer: The optimal GMM weighting matrix minimizes the size of the asymptotic covariance matrix of the estimator. However, a practical trade-off exists, and therefore the selection of a powerful yet parsimonious set of non-integrable estimating functions remains a key consideration for applying this framework.
However, we do not pursue this methodological discussion further as a complete treatment is beyond the scope of the present work.

Finally, we look at the $\gamma$-score matching estimator for one of the most basic models.
{\color{black}
Before proceeding, we fix the notation for the criterion associated with the
$\gamma$-divergence.  For $\gamma>0$, a model $p_\theta$, and a sample
$\{x_i\}_{i=1}^n$, define the scale-invariant ratio criterion
\begin{align}\label{gammaratio}
R_\gamma(\theta)
=\frac{\displaystyle \frac{1}{n}\sum_{i=1}^{n}p_\theta(x_i)^\gamma}
{\displaystyle \left\{\int p_\theta(x)^{\gamma+1}dx\right\}^{\gamma/(\gamma+1)}}.
\end{align}
The logarithmic empirical $\gamma$-loss used in this paper is the monotone
transformation
\begin{align}\label{gammaloss}
L_\gamma(\theta)
&=-\frac{1}{\gamma}\log R_\gamma(\theta)\\
&=-\frac{1}{\gamma}\log\left\{
\frac{1}{n}\sum_{i=1}^{n}p_\theta(x_i)^\gamma
\right\}
+\frac{1}{\gamma+1}\log\left\{
\int p_\theta(x)^{\gamma+1}dx\right\}.
\end{align}
Thus, minimizing $L_\gamma(\theta)$ is equivalent to maximizing
$R_\gamma(\theta)$, and either criterion yields the minimum
$\gamma$-divergence estimator.  The integral
$\int p_\theta^{\gamma+1}dx$ is intractable for many models with an unknown
normalizing constant, whereas the $\gamma$-Stein approach developed below
avoids this integral entirely.
}

\begin{example}
Let us consider a Normal model
\[
 p_\theta(x)=\frac{\exp\{-\frac12(x-\mu)\Sigma^{-1}(x-\mu)\}}{Z_\theta},
\]
where $\theta=(\mu,\Sigma^{-1})$.
Here the normalizing constant is known as  $Z_\theta=\det(2\pi\Sigma)^\frac12$.
{\color{black} Hence, the $\gamma$-loss \eqref{gammaloss} is written in closed form as
\[
       L_\gamma(\theta)=-\frac{1}{\gamma} 
\log\left[\frac1n\sum_{i=1}^n \exp\left\{-\frac{\gamma}{2}(x_i-\mu)\Sigma^{-1}(x_i-\mu)\right\}\right]
+\frac{1}{2}\frac{1}{\gamma+1}\log\{\det(2\pi\Sigma/(\gamma+1))\}
\]
}
This induces the estimating equation:
\[
     V_\gamma(\theta)=\frac1n\sum_{i=1}^n u_\theta(x_i)^\gamma
\begin{bmatrix}
\Sigma^{-1}(x_i-\mu)\\[2mm]
(x_i-\mu)(x_i-\mu)^\top-\frac{1}{\gamma+1}\Sigma\;
\end{bmatrix}=
\begin{bmatrix}
0\\[2mm]
\mathrm{O}
\end{bmatrix},
\]
where $u_\theta(x)=\exp\{-\frac{1}{2}(x-\mu)\Sigma^{-1}(x-\mu)\}$.
This form shows strong robustness by downweighting the contribution of observations $x_i$ with large Mahalanobis distance to the mean $\mu$, $(x_i-\mu)\Sigma^{-1}(x_i-\mu)$.
The minimum $\gamma$-divergence estimator for $\Sigma$ is deeply discussed in \citet{hung2022robust}.
Alternatively, the $\gamma$-Stein estimating equation is given by
\[
     U_\gamma(\theta)=\frac1n\sum_{i=1}^n u_\theta(x_i)^\gamma
\begin{bmatrix}
\Sigma^{-1}(x_i-\mu)\\[2mm]
({\gamma+1})\Sigma^{-1}(x_i\!-\!\mu)(x_i\!-\!\mu)^\top-\mathrm I_d\;
\end{bmatrix}=
\begin{bmatrix}
0\\[2mm]
\mathrm{O}
\end{bmatrix}
\]
 {where the second block is understood after symmetrization with respect to the precision-matrix parameter. Equivalently, multiplying the second equation by $\Sigma$ on the left gives the same covariance fixed point as the $\gamma$-divergence estimating equation.}
In this way, the estimating functions $V_\gamma(\theta)$ and $U_\gamma(\theta)$ are close to each other, however, $U_\gamma(\theta)$ is driven without any information of $Z_\theta$ and has asymmetric Jacobian matrix.
The fixed-point algorithms for solving $V_\gamma(\theta)=0$ and $U_\gamma(\theta)=0$ are equal as
\[
     \mu \leftarrow\; \frac{\sum_{i=1}^n u_\theta(x_i)^\gamma x_i}{\sum_{i=1}^n u_\theta(x_i)^\gamma}
\quad \text{ and } \quad
   \Sigma \leftarrow \;(\gamma+1) \frac{\sum_{i=1}^n u_\theta(x_i)^\gamma (x_i-\mu)(x_i-\mu)^\top}{\sum_{i=1}^n u_\theta(x_i)^\gamma}.
\]
\end{example}
In this way, the proposed estimator can be organized parallel to other established estimation procedures. 
We next discuss more advanced applications to some notable models.
The following examples are crucial applications of the geometric divergence and Laplacian operators defined on a unit sphere.

%\subsubsection*{Alternative validator: anchored $\gamma_0$-KSD.}

\subsection{von Mises--Fisher and Fisher-Bingham models}\label{sub32}

Consider a typical example on a unit sphere, in which the MLE and the $\gamma$-score matching estimator are both well organized for the parametric estimation.
On the compact sphere, there is no boundary, so the Stein identities apply without edge terms.
Let $x_1,\dots,x_n \in S^{d-1}\subset\mathbb{R}^d$ be unit vectors.
The von Mises--Fisher (vMF) density is
\[
p(x; \mu,\kappa) \;=\; C_d(\kappa)\,\exp\{\kappa\,\mu^\top x\},\qquad
\|\mu\|=1,\ \kappa\ge 0,
\]
with normalizing constant
\[
C_d(\kappa)\;=\;\frac{\kappa^{\nu}}{(2\pi)^{d/2}\,I_{\nu}(\kappa)},\qquad
\nu=\frac{d}{2}-1,
\]
where $I_\nu$ is the modified Bessel function of the first kind.
The MLE for $(\mu,\kappa)$ is given by solving
\[
(\mathrm I_d-\mu\mu^\top)\,R=0, \quad
\frac{I_{\nu+1}(\kappa)}{I_{\nu}(\kappa)}= \frac{R}{\|R\|},
\]
where  $R=\sum_{i=1}^n x_i$ and $ \nu=\frac{d}{2}-1$.

Let us look at geometric operators on $S^{d-1}$ for Stein identities.
For a smooth scalar $g:\,S^{d-1}\to\mathbb{R}$, with ambient extensions in $\mathbb{R}^d$,
\[
\nabla_S \;g(x) \;=\; (\mathrm I_d-xx^\top)\,\nabla_x g(x),
\]
\[
\Delta_S g(x) \;=\; \mathrm{tr}\!\big((\mathrm I_d-xx^\top)\,\nabla_x^2 g(x)\,(\mathrm I_d-xx^\top)\big) \;-\; (d-1)\,x^\top \nabla_x g(x).
\]
For the vMF model,
\[
\nabla_S \log p(x;\mu,\kappa) \;=\; \kappa\big(\mu-(\mu^\top x)\,x\big), \quad
\Delta_S \log p(x;\mu,\kappa) \;=\; -(d-1)\,\kappa\,(\mu^\top x).
\]
On the boundaryless manifold $S^{d-1}$, the Stein identity reads
\[
\mathbb{E}_{X\!\sim p}\!\left[\Delta_S g(X) + \langle \nabla_S g(X),\,\nabla_S \log p(X)\rangle\right] = 0.
\]
The $\gamma$-weighted version (used by $\gamma$-score matching) is
\[
\mathbb{E}_{X\!\sim p}\!\left[p(X)^{\gamma}\Big\{\Delta_S g(X) + (\gamma+1)\langle \nabla_S g(X),\,\nabla_S \log p(X)\rangle\Big\}\right] = 0.
\]
Thus, the $\gamma$-score matching estimating equation is given by
\[
(\mathrm I_d-\mu\mu^\top)\tilde R \;=\; 0
,\quad (d-1)\,m_1\;-\;\kappa\,\big(1-m_2\big)\;=\;0.
\]
where
\[
\tilde R=\frac{\sum_{i=1}^n w_i x_i}{\sum_{i=1}^n w_i},\quad m_1=\frac{\sum_{i=1}^n w_i \mu^\top x_i}{\sum_{i=1}^n w_i},\quad
m_2=\frac{\sum_{i=1}^n w_i (\mu^\top x_i)^2}{\sum_{i=1}^n w_i}
\]
with \(
w_i= \exp\{\gamma\,\kappa\,\mu^\top x_i\}.
\)
A practical fixed-point update (with weights held fixed within the step) is
\[
\begin{pmatrix}
\mu\\ \kappa
\end{pmatrix}
 \;\longleftarrow\; 
\begin{pmatrix}
\displaystyle \frac{\tilde R}{\|\tilde R\|}\\[5mm]
\displaystyle \frac{(d-1)\,m_1}{\,1-m_2\,}
\end{pmatrix}.
\]
In this way, the $\gamma$-score matching needs no knowledge of the normalizing constant $C_d(\kappa)$.

Strictly, misaligned means $|\mu^\top x|$ is small as $\gamma$ increases, yielding improved robustness under antipodal or orthogonal contamination at a modest efficiency cost under clean data.
We have a small simulation study with observations on the unit sphere $S^{d-1}$ with $d=3$. Clean samples
are drawn from a von Mises--Fisher distribution
\[
X \sim \mathrm{vMF}(\mu^*,\kappa^*),\qquad \mu^*=(1,0,0)^\top,\ \kappa^*=10.
\]
Sample size is fixed at $n=400$. To assess robustness we contaminate the data by
replacing an $\varepsilon$ fraction of the sample with an ``antipodal spike'':
\[
 {(1-\varepsilon) \mathrm{vMF}(\mu^*,\,\kappa^*)+\varepsilon \mathrm{vMF}(-\mu^*,\,50)}
\]
for $\varepsilon\in\{0,0.05,0.10,0.20\}$.
Each configuration is replicated $r=50$ times.

We compare the MLE with the $\gamma$-score matching
estimator. Both estimators are reported in the
$(\mu,\kappa)$ parameterization; for $\gamma$-score matching we choose $\gamma=0.0, 0.05, 0.1,0.2,0.3$.
The \emph{trace RMSE} of $\hat\mu$ is defined as
\[
\mathrm{RMSE}_{\mathrm{tr}}(\mu)
=\sqrt{\mathbb{E}\,\mathrm{tr}\!\big((\hat\mu\hat\mu^\top-\mu^*{\mu^*}^\top)^2\big)}
=\sqrt{\,2\,\mathbb{E}\big[1-(\hat\mu^\top \mu^*)^2\big]\,},
\]
and the integrated RMSE for $(\hat\kappa,\hat\mu)$ is given by the sum of the trace RMSE for $\hat\mu$ and RMSE for $\hat\kappa$ 
estimated across the $r=50$ replications. 

We give the integrated RMSE of $(\kappa,\mu)$ versus contamination.
The $\gamma$--score matching curve grows slowly with $\varepsilon$, while MLE degrades
under heavy contamination. In a representative run, at $\varepsilon=0.05$ the  RMSE for MLE is approximately $4.81$ compared to $0.56$ for $\gamma$-score matching of $\gamma=0.05$.
A concise summary  is given in Table~\ref{tab1}.

\begin{table}[htbp]
\caption{Integrated RMSEs for MLE vs. $\gamma$-Score Matching}
\label{tab1}
\centering
% Add \usepackage{booktabs} and \usepackage{multirow} to your document preamble
\begin{tabular}{l l r r r r}
\toprule
& & \multicolumn{4}{c}{Contamination Level ($\epsilon$)} \\
\cmidrule(l){3-6} 
\multicolumn{2}{l}{Estimator} & 0.00 & 0.05 & 0.10 & 0.20 \\
\midrule
\multicolumn{2}{l}{MLE} & {\bf 0.45} & 4.81 & 6.53 & 8.06 \\
\addlinespace % Adds a clean visual break
\multirow{5}{*}{$\gamma$-SME} & $\gamma=0.00$ &{\bf  0.45} & 0.88 & 1.66 & 3.50 \\
& $\gamma=0.05$ & 0.76 & {\bf 0.56} & {\bf 0.55} & 1.40 \\
& $\gamma=0.10$ & 1.29 & 1.19 & 1.08 & {\bf 0.73} \\
& $\gamma=0.20$ & 2.65 & 2.66 & 2.69 & 2.59 \\
& $\gamma=0.30$ & 4.45 & 4.49 & 4.56 & 4.44 \\
\bottomrule
\end{tabular}
\end{table}
This result suggests to build a data-adaptive selection for the robust parameter $\gamma$.
We discuss a method by $k$-fold cross validation (CV) with an anchored CV error in a subsequent discussion.

\subsubsection*{Fisher-Bingham (FB) model}
We discuss a natural and more complex extension of the von Mises-Fisher model,  highly flexible for modeling directional data but presents significant computational challenges.
The FB model on $S^{d-1}$ has density
\[
p(x ;\xi,B)\ =  \exp\{\xi^\top x + x^\top B x\},\qquad \|x\|=1,
\]
where $\xi\in\mathbb{R}^d$, $B$ is a symmetric matrix  with $\mathrm{tr}(B)=0$, and
\[
Z(\xi,B)=\int_{S^{d-1}}\exp\{\xi^\top x + x^\top B x\}\,d\sigma(x).
\]
Here $Z(\xi,B)$ is a hypergeometric function of a
matrix argument. Its stable evaluation (and derivatives) becomes computationally demanding as $d$ increases and/or $B$ is
anisotropic. This motivates normalizer-free estimation.
One observes
\begin{align}\nonumber
\nabla_S \log p(x ;\xi,B)&=(\mathrm I_d-xx^\top)\big(\xi + 2B x\big),
\\[2mm]\nonumber
\Delta_S \log p(x ;\xi,B)&= -(d-1)\,\xi^\top x\;-\;2d x^\top B x
\end{align}
since $\mathrm{tr}(B)=0$.
Thus, we observe  the $\gamma$-Stein identity on $S^{d-1}$:
For any smooth $g:S^{d-1}\to\mathbb{R}$,
\[
\mathbb{E}_{p}\!\left[p(X;\xi,B)^{\gamma}\Big\{\Delta_S \,g(X)+(\gamma+1)\,
\big\langle \nabla_S\, g(X),\,\nabla_S \log p(X;\xi,B)\big\rangle\Big\}\right]=0,
\]
where $p=p(\,\cdot\, ;\xi,B)$.
Let us take the canonical score function
\[
   \nabla_{\xi_i}\nabla_{S}\log p(x;\xi,B)=(\mathrm I_d-xx^\top)e_i
\]
and
\[ 
\nabla_{B_{jk}}\nabla_{S}\log p(x;\xi,B)=-2(\mathrm I_d-xx^\top)e_{j}e_{k}^\top x
\]
for $1\leq i,j,k\leq d$ as a set of test functions, where $e_i$ is the canonical orthonormal basis of $\mathbb R^d$.
Noting
\[
   \nabla_{\xi_i}\Delta_{S}\log p(x;\xi,B)=-(d-1)x_i\quad
\nabla_{B_{jk}}\Delta_{S}\log p(x;\xi,B)=-2 d x_j x_k,
\]
the $\gamma$ score matching estimating equation is given as follows:
\begin{align*}
\mathbb{E}_{p}\!\left[p(X;\xi,B)^{\gamma}\Big\{-(d-1)X_i+(\gamma+1)\,
\left( \xi_i + 2(BX)_i - X_i(\xi^\top X + 2X^\top B X) \right)\Big\}\right]=0,
\end{align*}
\begin{align*}
\mathbb{E}_{p}\!\left[p(X;\xi,B)^{\gamma}\Big\{2\delta_{jk} - 2dX_jX_k + (\gamma+1)\Big(X_j\xi_k + X_k\xi_j + 2(X_j(BX)_k + X_k(BX)_j) \right. \\ \left. - 2X_jX_k(\xi^\top X + 2X^\top B X)\Big)\Big\}\right]=0
\end{align*}
for all $i$  $(1\leq i\leq d)$ and all $j,k$ $(1 \leq j \leq k \leq d)$.
These equations, along with the constraint $\mathrm{tr}(B)=0$, can be solved numerically by replacing the expectation $\mathbb{E}_p[\,\cdot\,]$ with a sample average over observed data.
The procedure to solve the empirical equation for a given observations $x_1,\dots,x_n\in S^{d-1}$
 is computationally efficient, as it only involves matrix-vector products and solving small linear systems at each step. The robustness is inherited from the $\gamma$-weighting, which down-weights observations $x_i$ that are misaligned with the current estimate of $\xi$ or fall in directions penalized by $B$.

The application of the $\gamma$-score matching estimator to the von Mises-Fisher and Fisher-Bingham models gives its efficacy for distributions on the unit sphere. 
The key insight is that the method's foundation--the $\gamma$-Stein identity--can be readily adapted to any boundaryless manifold where appropriate surface gradient and Laplacian operators are defined.
This provides a clear path for extending the framework to other important distributions on classical manifolds used in multivariate analysis. For instance, the Stiefel manifold, which parameterizes sets of orthonormal frames, and the Grassmann manifold, which parameterizes subspaces, both host rich families of distributions for analyzing directional and frame data. Many of these models, such as the Bingham-Stiefel and matrix Langevin distributions, also suffer from computationally intractable normalizing constants. As detailed in \citet{chikuse2003statistics},
%Chikuse's "Statistics on Special Manifolds," 
these distributions are important in fields ranging from bioinformatics to computer vision. The normalizer-free and robust nature of the $\gamma$-score matching approach makes it a promising candidate for developing efficient inference procedures in these more complex geometric settings.

We next focus on a case where the $\gamma$-minimum divergence estimation is challenging because the empirical loss function involves intractable integral term.

\subsection{Normal Mixture Model (NMM)}\label{sub33}

Consider a normal mixture density modeled by
\begin{align*}
    p_\theta(x)=\sum_{j=1}^J \pi_j\,\phi_j(x; \mu_j,\Sigma_j),
\end{align*}
where $\theta=\{(\pi_j,\mu_j,\Sigma_j)\}_{j=1}^J$, and $\phi_j(x; \mu,\Sigma)$ is a normal density function with mean $\mu$ and variance $\Sigma$. 
The MLE is usually employed and satisfies the efficient computation via the EM algorithm.
However, the statistical performance is fragile under small misspecification.
The density power divergence method can be employed as a robust alternative.
For example, the minimum $\gamma$-divergence method introduces the empirical $\gamma$-loss function \eqref{gammaloss}.
It gives a robust estimator, however it involves an intractable integral unless $\gamma+1$ is a positive integer, see \citet{fujisawa2006robust} for the procedure with $\gamma$ fixed at $1$.
This brings inflexibility for selecting better estimators.
To mitigate this issue, we take the $\gamma$-Stein approach. 
The $\gamma$-score estimating function for the model $p_\theta(x)$ is given by
\begin{align*}
    U_\gamma(\theta,x) &= A^{(\gamma)}_{p_\theta}\big(\nabla_\theta s_\theta(x)\big) \\
    &= p_\theta(x)^\gamma\Big\{(\gamma+1)\,s_\theta(x)^\top \nabla_\theta s_\theta(x) + \nabla_x \cdot \nabla_\theta s_\theta(x)\Big\}
\end{align*}
with $s_\theta(x)=\nabla_x\log p_\theta(x)$.
 {Let $\Lambda_j=\Sigma_j^{-1}$ and let $s_j(x)=\nabla_x\log\phi_j(x;\mu_j,\Sigma_j)=\Lambda_j(\mu_j-x)$ denote the component score.}
However, the form is complicated, since $s_\theta$ is a responsibility-weighted
average of the component scores and its parameter derivatives involve the
derivatives of $r_j$ together with interactions among all components.
{\color{black}We select the following computationally tractable, component-wise test fields.  The notation $f^{(\Lambda_j)}[H]$ denotes the directional field associated with a symmetric perturbation $H$ of the precision matrix $\Lambda_j$.  These fields are not the parameter derivatives $\nabla_\theta s_\theta$; consequently, unlike the setting of Proposition~\ref{symmetry}, the resulting population Jacobian is generally a nonsymmetric cross-Gram matrix rather than a positive-definite Gram matrix:}
\[
 {
\begin{aligned}
f^{(\pi_j)}(x)      &:= s_j(x),\\[2pt]
f^{(\mu_j)}[v](x)     &:= r_j(x)\,v,\\[2pt]
f^{(\Lambda_j)}[H](x) &:= r_j(x)\,H\big(\mu_j-x\big),
\end{aligned}}
\qquad
r_j(x)=\frac{\pi_j\phi_j(x)}{p_\theta(x)},
\]
where $v\in\mathbb R^{d}$ and $H$ denote perturbation directions of $\mu_j$ and
of the precision matrix $\Lambda_j$, respectively.
These fields remain elementary---using only $r_j$ or $s_j$, with no Hessians or $\nabla_x s_\theta$, and their divergences are easy because
\[
\nabla_x r_j(x)=r_j(x)\{s_j(x)-s_\theta(x)\},\qquad
\nabla_x\!\cdot s_j(x)=-\operatorname{tr}(\Lambda_j).
\]
Resulting $\gamma$-Stein estimating functions (all explicit):
\[
\begin{aligned}
U_\gamma^{(\pi_j)}(\theta,x)
&=p_\theta(x)^\gamma\Bigl\{(\gamma+1)\langle s_\theta(x),s_j(x)\rangle
   -\operatorname{tr}\Lambda_j\Bigr\},\\[4pt]
U_\gamma^{(\mu_j)}(\theta,x)
&=p_\theta(x)^\gamma\,r_j(x)\bigl\{s_j(x)+\gamma\,s_\theta(x)\bigr\},\\[4pt]
U_\gamma^{(\Lambda_j)}(\theta,x)[H]
&=p_\theta(x)^\gamma\,r_j(x)\Bigl\{\bigl(s_j(x)+\gamma s_\theta(x)\bigr)^{\!\top}
   H\bigl(\mu_j-x\bigr)-\operatorname{tr}H\Bigr\}.
\end{aligned}
\]
Equivalently, for the precision block we may write the matrix form
\[
U_{\gamma,\text{mat}}^{(\Lambda_j)}(\theta,x)
=\;p_\theta(x)^\gamma\,r_j(x)\,\mathrm{sym}\!\Big((\mu_j-x)\big(s_j(x)+\gamma s_\theta(x)\big)^\top-\mathrm I\Big),
\]
so that $\langle U_{\gamma,\text{mat}}^{(\Lambda_j)},H\rangle_F=U_\gamma^{(\Lambda_j)}[H]$ for any symmetric $H$.
{\color{black}Detailed derivations are given in Appendix~\ref{Appendix-A}.
To state the relevant local-identification condition, let
$m_a(\theta)=\mathbb E_{p_{\theta_\star}}
[\mathcal A_{p_\theta}^{(\gamma)}f_a(\theta,X)]$, where $a$ indexes scalar
coordinates of the fields above, including vectorized matrix coordinates.
Differentiation at $\theta=\theta_\star$, together with the weighted Stein
identity, gives
\begin{align}\label{eq:mixture-cross-gram}
\frac{\partial m_a(\theta_\star)}{\partial\theta_b}
=\mathbb E_{p_{\theta_\star}}\!\left[
p_{\theta_\star}(X)^\gamma
\left\langle
\partial_{\theta_b}s_{\theta_\star}(X),
f_a(\theta_\star,X)
\right\rangle\right].
\end{align}
Hence the Jacobian is a cross-Gram matrix and need not be symmetric.
The mixing weights must also be treated on the simplex, because
$\sum_j\pi_j\partial_{\pi_j}s_\theta=0$ identically vanishes the radial
direction.  We therefore impose the explicit local rank condition that the
Jacobian in \eqref{eq:mixture-cross-gram}, after restriction to the tangent
space $\{\delta\pi:\sum_j\delta\pi_j=0\}$ and the remaining parameter
coordinates, is nonsingular at $\theta_\star$ (for a fixed labeling).}

 {To evaluate the practical performance and robustness of the proposed simplified
\(\gamma\)-Stein estimator, we conduct a simulation study comparing it with the
standard MLE, implemented via the expectation--maximization (EM) algorithm.
Following the reviewer's suggestion, we revise the previous symmetric mixture
setting and consider an asymmetric two-component, two-dimensional spherical
normal mixture model.  This design reduces the label-switching ambiguity and
makes the effect of mean-directed contamination more transparent.}

 {The clean data-generating distribution is}
{ 
\[
0.65\,N\{(-2,0)^\top,0.45I_2\}
+
0.35\,N\{(2,0)^\top,0.70I_2\}.
\]
}
 {For each experimental condition, we draw \(n=500\) observations.  To assess
robustness, we replace an \(\epsilon\) fraction of observations by outliers
generated from}
{ 
\[
N\{(6,0)^\top,0.25I_2\}.
\]
}
 {This contamination mechanism directly pulls the mean of the second component
to the right.  We consider
\(\epsilon\in\{0,0.05,0.10,0.15\}\) and repeat the experiment 100 times for each
level.  The component labels are aligned by the \(x\)-coordinate of the estimated
means before computing the errors.}

 {We first examined \(\gamma\in\{0.10,0.15,0.20,0.25\}\).  The results showed a
monotone robustness pattern: larger values of \(\gamma\) more strongly
down-weight the mean-directed contaminating cluster.  We therefore report
\(\gamma=0.25\) in Table~\ref{tab:gmm_rmse}, which gave the best overall
performance for the contaminated settings among the examined values.  The full
grid led to the same qualitative conclusion.}

 {Table~\ref{tab:gmm_rmse} shows the expected robustness--efficiency trade-off.
Under the clean model, MLE is more efficient, especially for the mixing weights.
However, once mean-directed contamination is introduced, the proposed estimator
substantially stabilizes the component mean and variance estimates.  At
\(\epsilon=0.05\), the RMSE for the component means decreases from \(0.248\)
for MLE to \(0.073\) for the \(\gamma\)-Stein estimator, and the error of the
second component mean decreases from \(0.493\) to \(0.135\).  At
\(\epsilon=0.10\), the corresponding errors are reduced from \(0.462\) to
\(0.272\) for the component means and from \(0.922\) to \(0.542\) for the
second component mean.  These results show that density-powered Stein weighting
effectively suppresses the influence of observations that pull the fitted mean
away from the main mixture component.}

 {For heavier contamination, such as \(\epsilon=0.15\), the problem becomes
intrinsically more difficult because the contaminating cluster starts to behave
like an additional mixture component.  Even in this case, the \(\gamma\)-Stein
estimator still improves the mean estimation, although the variance RMSE is no
longer uniformly better than that of MLE.  This illustrates that the proposed
method should be interpreted as a robust alternative with a clear
efficiency--robustness trade-off, rather than as a uniformly dominating
procedure.}

\begin{table}[htbp]
\centering
 
\caption{Monte Carlo performance for the asymmetric two-component normal
mixture under mean-directed contamination.  The clean mixture has weights
\((0.65,0.35)\), means \((-2,0)^\top\) and \((2,0)^\top\), and variances
\(0.45I_2\) and \(0.70I_2\).  An \(\epsilon\) fraction of observations is
replaced by draws from \(N\{(6,0)^\top,0.25I_2\}\).  Entries are averages over
100 replications.  The smaller value for each criterion at each contamination
level is bolded.}
\label{tab:gmm_rmse}
\bigskip
\begin{tabular}{@{}llrrrr@{}}
\toprule
\(\epsilon\)
& Estimator
& RMSE\((\pi)\)
& RMSE\((\mu)\)
& Error\((\mu_2)\)
& RMSE\((\sigma^2)\) \\
\midrule
0.00
& \(\gamma\)-Stein \((\gamma=0.25)\)
& 0.108 & 0.047 & 0.074 & 0.043 \\
& MLE (EM)
& \textbf{0.018} & \textbf{0.045} & \textbf{0.069} & \textbf{0.040} \\
\midrule
0.05
& \(\gamma\)-Stein \((\gamma=0.25)\)
& 0.089 & \textbf{0.073} & \textbf{0.135} & \textbf{0.178} \\
& MLE (EM)
& \textbf{0.038} & 0.248 & 0.493 & 0.625 \\
\midrule
0.10
& \(\gamma\)-Stein \((\gamma=0.25)\)
& 0.077 & \textbf{0.272} & \textbf{0.542} & \textbf{0.798} \\
& MLE (EM)
& \textbf{0.069} & 0.462 & 0.922 & 0.998 \\
\midrule
0.15
& \(\gamma\)-Stein \((\gamma=0.25)\)
& \textbf{0.081} & \textbf{0.545} & \textbf{1.089} & 1.318 \\
& MLE (EM)
& 0.101 & 0.659 & 1.317 & \textbf{1.187} \\
\bottomrule
\end{tabular}
\end{table}

\subsection{Quartic potential model}\label{sub34}

We next consider a quartic potential model to demonstrate the estimator's utility in a more complex and realistic scenario, cf.  \citet{kleinert2009path} for the meaning and roles in statistical mechanics.
The model is defined by the unnormalized density 
\begin{align}\nonumber
f_\theta(x) = \exp(\theta_1 x + \theta_2 x^2 + \theta_3 x^4).
\end{align}  
This model is an ideal test case because its normalizing constant is intractable, making standard MLE computationally demanding. The true parameters yield a bimodal distribution, as seen in Figure \ref{fig:quartic}.

We compare the $\gamma$-score matching estimators in two scenarios: one with clean data and one with outliers as given in Table \ref{tab:quartic_results}.

\begin{itemize}
\item {Scenario 1: No outliers.}
In an ideal, contamination-free setting, the MLE is, as expected, more statistically efficient and achieves a lower RMSE.
The $\gamma$-Stein estimators exhibit a slight loss of efficiency, which represents the classic trade-off between robustness and optimal performance on clean data. 

\item {Scenario 2: With outliers.} 
The difference between the MLE and the $\gamma$-Stein estimators becomes clearer
under contamination.
Conversely, the $\gamma$-Stein estimators remain remarkably stable.
The estimator with $\gamma= 0.3$, in particular, maintains an RMSE that is nearly identical to its performance on clean data, reducing the influence of the outliers. 
\end{itemize}
These estimates are obtained without the numerical integration required by the
MLE in this example.  Thus the experiment illustrates both the robustness effect
of the weighting and the normalizing-constant-free nature of the estimating
equation.

\begin{figure}[htbp]
\centering
\includegraphics[width=0.7\textwidth]{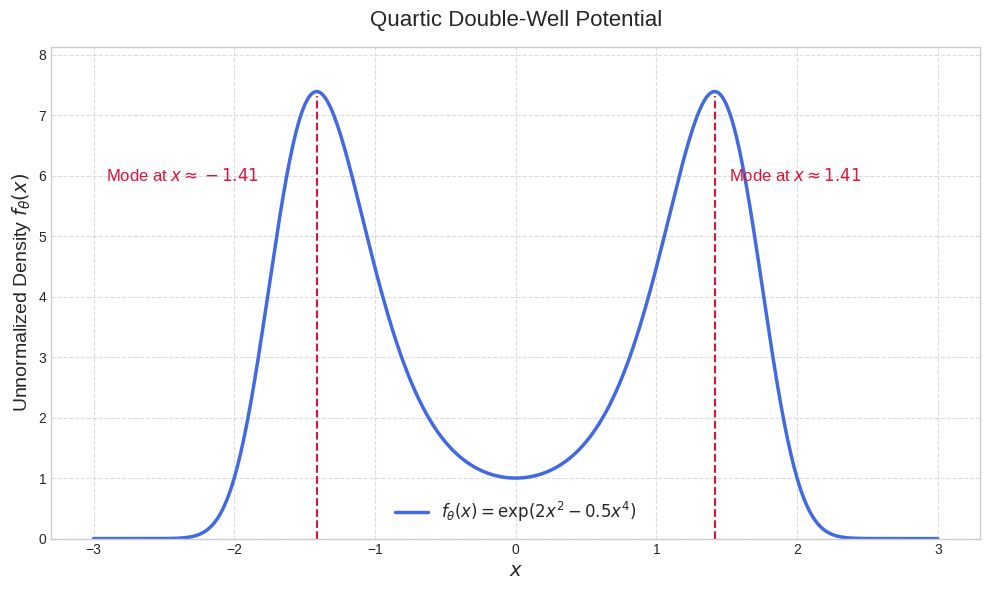} % User should provide the image
\caption{The bimodal shape of the unnormalized quartic potential density $f_\theta(x)$ for $\theta=(\,0, 2, -0.5)$.}
\label{fig:quartic}
\end{figure}

\begin{table}[htbp]
\centering
\caption{Performance comparison on the quartic potential model.}
\label{tab:quartic_results}
%{2mm}
\begin{tabular}{lrrrr}
\toprule
 {Estimator} &  {Mean $\hat{\theta}_1$} &  {Mean $\hat{\theta}_2$} &  {Mean $\hat{\theta}_3$} &  {RMSE} \\
\midrule
\multicolumn{5}{l}{\textit{Scenario: No Outliers}} \\
True & 0.0000 & 2.0000 & -0.5000 &  \\
MLE & -0.0043 & 2.0653 & -0.5204 & 0.2745 \\
$\gamma$-Stein ($\gamma=0.3$) & 0.0071 & 1.9241 & -0.4341 & 0.4128 \\
$\gamma$-Stein ($\gamma=0.5$) & -0.0778 & 2.9560 & -0.6490 & 1.4536 \\
\midrule
\multicolumn{5}{l}{\textit{Scenario: With Outliers}} \\
True & 0.0000 & 2.0000 & -0.5000 &  \\
MLE & 0.0416 & -0.0733 & -0.0215 & 2.1311 \\
$\gamma$-Stein ($\gamma=0.3$) & -0.0216 & 1.9387 & -0.4397 & 0.4542 \\
$\gamma$-Stein ($\gamma=0.5$) & 0.0475 & 2.3735 & -0.4968 & 0.5450 \\
\bottomrule
\end{tabular}
\end{table}

\subsection{Selecting the robustness parameter \texorpdfstring{$\gamma$}{gamma}}\label{sub35}

The robustness parameter $\gamma>0$ controls the bias-variance trade-off of the $\gamma$-Stein estimators: small $\gamma$ prioritizes efficiency under well-specified models, while larger $\gamma$ down-weights low-density (outlier) regions and improves robustness at a possible efficiency cost. We outline several principled, implementable strategies for choosing~$\gamma$.

\subsubsection*{Robust cross validation}
Split the data into $K$ folds with index sets $({\mathcal I_k})_{k=1}^K$.
Fix an \emph{anchor} level $\gamma_0\in(0,\gamma_{\max}]$ (e.g., $\gamma_0=0.1$) that defines the validation moment. 
For each candidate $\gamma$
\begin{enumerate}
\item {Fit:} Compute $\hat\theta_\gamma^{(-k)}$ on the training data ${\mathcal I_k}^c\,$ by solving the $\gamma$-Stein estimating equation.
\item {Validate:} Evaluate the anchored residual norm on the held-out fold:
\begin{align}\label{simple}
\mathrm{CV}_\gamma^{(k)}=\frac{1}{|\mathcal I_k|}\sum_{i\in\mathcal I_k}
U_{\gamma_0}\big(\hat\theta_\gamma^{(-k)},X_i\big)^\top
U_{\gamma_0}\big(\hat\theta_\gamma^{(-k)},X_i\big).
\end{align}
\end{enumerate}
Aggregate $\mathrm{CV}_\gamma=\frac{1}{K}\sum_{k=1}^K \mathrm{CV}_\gamma^{(k)}$, and find $\hat\gamma$ minimizing $\mathrm{CV}_\gamma$.
As a robust alternative to the squared $\gamma_0$-residual in \eqref{simple}, we may validate each fit $\hat\theta^{(-k)}_\gamma$ using the $\gamma_0$-kernelized Stein discrepancy (KSD), specialized to the model at hand. 
The general $\gamma$-KSD is formally introduced in Section \ref{KSD}; here we only use its empirical form for validation.
Replacing the squared $\gamma_0$-anchored  squared-residual with a $\gamma_0$-KSD gives a distribution-level, geometry-aware validation score rather than a moment-level proxy. 
It tends to (i) penalize shape mismatch more faithfully, (ii) be less sensitive to parametrization, and (iii) separate the anchor (robustness of the validator) from the $\gamma$-fitting cleanly.

The use of a fixed anchor `$\gamma_0$' in the validation step is a deliberate choice to stabilize the evaluation criterion. A more conventional approach might evaluate the residual norm using the same $\gamma$ that was used for fitting (i.e., using $U_{\gamma}(\hat\theta_\gamma^{(-k)},X_i)$). However, this would mean that the evaluation metric itself changes with each candidate $\gamma$, confounding the selection process. By fixing the validation moment at  $\gamma_0$, we ensure that all candidate models, parametrized by $\hat\theta_\gamma^{(-k)}$, are evaluated against the same, consistent benchmark. The choice of a small, positive $\gamma_0$ (e.g., $\gamma_0=0.1$) is motivated by the desire for a highly robust metric; it ensures that the validation score itself is resistant to outliers within the held-out fold $\mathcal{I}_k$. This design decouples the search for an optimal robustness-efficiency trade-off (governed by $\gamma$) from the need for a reliably robust evaluation framework provided by $\gamma_0$.
It could build an optimal weight matrix based on the theory of GMM, and define the anchored norms defined by the weight matrix.  
However, we choose the simple squared residual norm \eqref{simple} since our objective is to build robust selection for the tuning parameter $\gamma$ in the presence of outliers rather than to give more efficient estimator.
The performance of this method is illustrated below.

We return the simulation study for the vMF model  on the unit sphere $S^{2}$:
\[
X \sim \mathrm{vMF}(\mu^*,\kappa^*),\qquad \mu^*=(1,0,0)^\top,\ \kappa^*=10
\]
considering a $\varepsilon$-contamination model
\[
 {(1-\varepsilon) \mathrm{vMF}(\mu^*,\,\kappa^*)+\varepsilon \mathrm{vMF}(-\mu^*,\,50)}.
\]
Apply the method of robust cross validation using  the $\gamma_0$-KSD, which
will be discussed in the general formulation.
Concretely, let $\widehat{S}^2_{\mathrm{KSD},\gamma_0}(\hat\theta^{(-k)}_\gamma;\mathcal I_k)$ denote the unbiased U-statistic estimator of the squared $\gamma_0$-KSD on the held-out fold $\mathcal I_k$ (we rely only on its value up to a multiplicative constant). We then set
\[
\mathrm{CV}^{(k)}_{\gamma,\mathrm{KSD}}=\widehat{S}^2_{\mathrm{KSD},\gamma_0}(\hat\theta^{(-k)}_\gamma;\mathcal I_k),
\qquad
\mathrm{CV}_{\gamma,\mathrm{KSD}}=\frac{1}{K}\sum_{k=1}^K \mathrm{CV}^{(k)}_{\gamma,\mathrm{KSD}}.
\]
Our experiments report both the argmin and the ``one-SE’’ choice: among $\gamma$ whose mean CV is within one standard error of the minimum, we select the smallest $\gamma$. The general $\gamma$-KSD is formally introduced in Section \ref{sub41}; here we only use its empirical form for validation.
For this validation task, it can be understood as a robust, kernel-based tool that measures the distance between the model and the data. 
Using it as our validation score provides a more comprehensive, geometry-aware benchmark.

The selected $\gamma$ tracks the contamination level $\varepsilon$ in a stable, anchor-invariant manner: $\gamma\approx 0.05$ at $\varepsilon\approx 0.05$ and $\gamma\approx 0.10$ at $\varepsilon\in\{0.10,0.20\}$, see Table \ref{t2}.
On clean data ($\varepsilon=0$), the one-SE rule prefers a smaller $\gamma$ (near $0$), avoiding spurious over-robustness. We include a compact table of selections (mean CV and stability proportion across replications), and report that fixing the kernel bandwidth across folds further stabilizes the validator.
The results in Table \ref{t2} also suggest that the final selection of $\hat{\gamma}$ is robust to the choice of the anchor $\gamma_0$ itself, even for $\gamma_0=0$.

\begin{table}[!ht]
\centering
%\small
\caption{\protect {Performance of anchored $\gamma_0$-KSD cross-validation. The entry ``$\hat\gamma$/prop'' reports the selected value of $\gamma$ and the stability proportion, i.e. the proportion of replications in which that value was selected. The symbol ``--'' indicates that the clean case has no contamination-matching stability target.}}
\label{t2}
%*{3mm}
\bigskip
\begin{tabular}{l cc cc cc}
% --- START CORRECTION ---
\toprule % <-- Was \hline
& \multicolumn{2}{c}{$\gamma_0 = 0.00$} & \multicolumn{2}{c}{$\gamma_0 = 0.05$} & \multicolumn{2}{c}{$\gamma_0 = 0.10$} \\
\cmidrule(lr){2-3} \cmidrule(lr){4-5} \cmidrule(lr){6-7} % <-- Was \cline.
$\varepsilon$ & $\hat{\gamma}$ / prop & KSD & $\hat{\gamma}$ / prop & KSD & $\hat{\gamma}$ / prop & KSD \\
\midrule % <-- Was \hline
0.00 & 0.05 / --   & 2.3055 & 0.10 / --   & 2.4581 & 0.10 / --   & 2.6378 \\
0.05 & 0.05 / 0.88 & 1.8787 & 0.05 / 0.84 & 1.8565 & 0.05 / 0.80 & 1.8440 \\
0.10 & 1.00 / 1.00 & 1.8503 & 0.10 / 0.92 & 1.8687 & 0.10 / 0.80 & 1.8980 \\
0.20 & 0.10 / 0.32 & 1.8967 & 0.10 / 0.36 & 1.8787 & 0.10 / 0.36 & 1.8618 \\
\bottomrule % <-- Was \hline
% --- END CORRECTION ---
\end{tabular}

\medskip % <-- REMOVE THIS LINE
\hspace{5mm}{\small `` -- '' indicates no stability match}
\end{table}

% Table 1: Selected \hat{\gamma} (best-by-mean) and stability (prop)

\section{Further Stein inference methods}

We also define a $\gamma$-kernel Stein discrepancy and examine its behavior in a
contaminated goodness-of-fit setting.  Finally, we describe a corresponding
weighted version of Stein variational gradient descent, in which particles in
low target-density regions have reduced influence on the velocity field.

\subsection{\texorpdfstring{$\gamma$}{gamma}-kernelized goodness-of-fit}\label{KSD}\label{sub41}

To make the core idea of Stein's method practical for goodness-of-fit testing, One approach is to kernelize the Stein operator.
Instead of searching over an arbitrary space of test functions, this technique leverages the rich structure of a Reproducing Kernel Hilbert Space (RKHS). 
By selecting the test functions from the unit ball within an RKHS, the resulting discrepancy--known as the Kernel Stein Discrepancy (KSD)--can often be computed in a simple closed form using only the kernel function. This provides an elegant and practical measure of the difference between distributions. Crucially, if the chosen kernel is "characteristic," the KSD is zero if and only if the two distributions are identical, which guarantees that the resulting GoF test is consistent. This powerful combination of Stein's method and kernel spaces has become a cornerstone of modern non-parametric hypothesis testing,
see \citet{liu2016kernelized, chwialkowski2016kernel}.

 {A powerful application of the $\gamma$-Stein operator is the development of robust, non-parametric discrepancies for goodness-of-fit diagnostics. In this subsection, $p$ denotes the data-generating density and $q$ denotes the target model used in the Stein operator. For a classical uncontaminated GoF test, the null is $H_0:p=q$. Under contamination, however, the statistic is better interpreted as a robust diagnostic for the main component of the data-generating distribution rather than as a test that arbitrary outliers should be ignored without calibration. In the numerical experiment below we therefore calibrate the critical values under a contaminated null distribution. Rejection then means that, after allowing for the specified contamination mechanism, the main component departs from the target model.}

To do this, we move from the standard $L^2$ space to a more powerful RKHS, $\mathcal{H}_K$, which allows us to work with a rich class of test functions. This leads to the $\gamma$-Kernel Stein Discrepancy ($\gamma$-KSD).
The $\gamma$-KSD is defined as the maximum difference between the distributions $p$ and $q$ as measured by the $\gamma$-Stein operator over the unit ball of functions in the RKHS.
Let $K$ be a positive-definite kernel defining the RKHS $\mathcal{H}_K$. The squared $\gamma$-KSD between distributions $p$ and $q$ is given by:
\begin{align}\nonumber
S_{K}^{(\gamma)}(p\|q) = \sup_{f \in \mathcal{B}_K} \left( \mathbb{E}_{X \sim p}[\mathcal{A}_q^{(\gamma)}f(X)] \right)^2,
\end{align} 
where $\mathcal{B}_K$ is the unit ball in $\mathcal{H}_K^d$, see \citet{korba2021kernel}.

For this discrepancy to be the basis of a useful statistical test, it must satisfy two crucial properties:
\begin{itemize}
\item Characterization \& Test Consistency:
 For a test to be  consistent  (i.e., guaranteed to detect a true difference for large sample sizes), the discrepancy must be zero if and only if  the distributions are the same. This property holds if the kernel $K$ is  characteristic. 
For such kernels, $S_{K}^{(\gamma)}(p\|q) = 0 \iff q=p$. This ensures that a non-zero discrepancy is a true indicator of differing distributions.
\item   {Robustness: The operator's weighting term $q(x)^\gamma$ systematically down-weights observations in regions where the target model $q$ assigns low probability. This can improve the sensitivity of the statistic to departures in the bulk of the distribution when a small fraction of gross outliers is present. The price is that alternatives occurring mainly in low-density regions may be down-weighted, so the robust statistic and its calibration must be matched to the inferential goal.}
\end{itemize}
The power of the KSD framework is that it yields a closed-form expression that can be estimated from data.

\begin{theorem}{The $\gamma$-Kernel Stein Discrepancy has the closed-form expression}
\begin{align}\nonumber
S_{K}^{(\gamma)}(p\|q) = \mathbb{E}_{\,(X,X') \sim p^{\otimes 2}} \left[ q(X)^\gamma q(X')^\gamma u_{q,K}^{(\gamma)}(X,X') \right],
\end{align} 
where $u_{q,K}^{(\gamma)}$ is the Stein kernel:
\begin{align}\nonumber  
u_{q,K}^{(\gamma)}(x,x') = & (\gamma+1)^2s_q(x)^\top K(x,x')s_q(x') + (\gamma+1)s_q(x)^\top \nabla_{x'}K(x,x') \\[3mm]\label{concl}
 & + (\gamma+1)s_q(x')^\top \nabla_x K(x,x') + \mathrm{tr}\,(\nabla^2_{x,x'}K(x,x')).
\end{align}
\end{theorem}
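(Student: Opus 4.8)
The plan is to recognize the squared $\gamma$-KSD as the squared norm of a Riesz representer living in the vector-valued RKHS $\mathcal{H}_K^d$, then to expand that norm into a double expectation and evaluate the resulting inner products via the reproducing property and its first-derivative form. This is the standard Stein-kernel argument adapted to the density-powered operator, where the only new feature is that the weight $q(\cdot)^\gamma$ and the factor $(\gamma+1)$ ride along as scalars.

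First I would fix the convention that $f=(f_1,\dots,f_d)$ with each $f_j\in\mathcal{H}_K$, so that the reproducing property gives $f_j(x)=\langle f_j,K(x,\cdot)\rangle_{\mathcal H_K}$ and, under the standing smoothness assumption that $\partial_{x_i}K(x,\cdot)\in\mathcal H_K$, the derivative form $\partial_{x_i}f_j(x)=\langle f_j,\partial_{x_i}K(x,\cdot)\rangle_{\mathcal H_K}$. Writing $\mathcal{A}_q^{(\gamma)}f(x)=q(x)^\gamma\{(\gamma+1)\langle s_q(x),f(x)\rangle+\nabla_x\cdot f(x)\}$ componentwise and substituting these two identities lets me express $\mathcal{A}_q^{(\gamma)}f(x)=\langle f,\xi_x\rangle_{\mathcal H_K^d}$, where $\xi_x\in\mathcal H_K^d$ has $j$-th coordinate $q(x)^\gamma\{(\gamma+1)s_{q,j}(x)\,K(x,\cdot)+\partial_{x_j}K(x,\cdot)\}$.

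Next I would take the expectation under $p$ and move it inside the inner product (a Bochner-integral step), obtaining $\mathbb{E}_{X\sim p}[\mathcal{A}_q^{(\gamma)}f(X)]=\langle f,\xi\rangle_{\mathcal H_K^d}$ with $\xi:=\mathbb E_{X\sim p}[\xi_X]$. Since the supremum of a bounded linear functional over the unit ball $\mathcal{B}_K$ equals the norm of its representer (Cauchy--Schwarz, attained at $f=\xi/\|\xi\|$), squaring gives $S_K^{(\gamma)}(p\|q)=\|\xi\|_{\mathcal H_K^d}^2=\mathbb E_{(X,X')\sim p^{\otimes2}}[\langle\xi_X,\xi_{X'}\rangle_{\mathcal H_K^d}]$. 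Expanding $\langle\xi_x,\xi_{x'}\rangle$ factors out $q(x)^\gamma q(x')^\gamma$ and leaves four groups of terms, each evaluated by the identities $\langle K(x,\cdot),K(x',\cdot)\rangle=K(x,x')$, $\langle K(x,\cdot),\partial_{x'_j}K(x',\cdot)\rangle=\partial_{x'_j}K(x,x')$, $\langle\partial_{x_j}K(x,\cdot),K(x',\cdot)\rangle=\partial_{x_j}K(x,x')$, and $\langle\partial_{x_j}K(x,\cdot),\partial_{x'_j}K(x',\cdot)\rangle=\partial_{x_j}\partial_{x'_j}K(x,x')$; summing the last over $j$ produces $\tr(\nabla^2_{x,x'}K(x,x'))$. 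Collecting the four groups reproduces exactly $u_{q,K}^{(\gamma)}(x,x')$ of \eqref{concl}, and reinstating the density weights yields the claimed formula.

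The main obstacle is not the algebra but the functional-analytic bookkeeping: one must verify that $\xi_x$ (hence $\xi$) genuinely lies in $\mathcal H_K^d$ and that $f\mapsto\mathbb{E}_{X\sim p}[\mathcal{A}_q^{(\gamma)}f(X)]$ is a bounded linear functional, which requires the derivative-reproducing property (a $C^{1,1}$-type kernel condition ensuring $\partial_{x_i}K(x,\cdot)\in\mathcal H_K$) together with an integrability bound such as $\mathbb E_{X\sim p}\|\xi_X\|_{\mathcal H_K^d}<\infty$ to license both the Bochner interchange and the passage to a genuine $p^{\otimes2}$ expectation. These are precisely the routine smoothness and no-edge-effects conditions assumed earlier; once they are in force, the Riesz/reproducing calculus delivers the closed form without further subtlety.
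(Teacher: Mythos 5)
Your proposal is correct and follows essentially the same route as the paper's proof: representing $\mathcal{A}_q^{(\gamma)}f(x)$ as an inner product with a representer (your $\xi_x$, the paper's $g_q^{(\gamma)}(x)$), identifying the supremum over the unit ball with the norm of its expectation via Cauchy--Schwarz, and expanding the squared norm into a double expectation evaluated by the reproducing identities for $K$ and its derivatives. Your added attention to the Bochner-interchange and derivative-reproducing conditions is a point of care the paper subsumes under its standing smoothness assumptions, but the argument is the same.
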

\begin{proof}
We work in a vector-valued RKHS
\[
  \mathcal{H}_K^{d}
  =\bigl\{\,f=(f_1,\dots,f_d):f_j\in\mathcal{H}_K\bigr\},\qquad
  \langle f,g\rangle_{K}
  =\sum_{j=1}^{d}\langle f_j,g_j\rangle_{\mathcal{H}_K}.
\]
Fix the RKHS unit ball
\[
  \mathcal{B}_K
  =\bigl\{\,f\in\mathcal{H}_K^{d}:\lVert f\rVert_{K}\le 1\bigr\}.
\]
%The reproducing property gives, for each $f\in\mathcal{H}_k^{d}$,
%\[
%  f_j(x) =\langle f_j,k(x,\cdot)\rangle_{\mathcal{H}_k},\qquad
%  \partial_{x_j}f_j(x)
%  =\langle f_j,\partial_{x_j}k(x,\cdot)\rangle_{\mathcal{H}_k}.
% \]
Writing $K_x=K(x,\cdot)$ and $\nabla_x K_x$ for the gradient kernel vector,
\[
  \mathcal{A}_{q}^{(\gamma)}f(x)
  =\Bigl\langle
     f,\,
     {q(x)^{\gamma}}\bigl[(\gamma+1)\,s_q(x)K_x+\nabla_x K_x\bigr]
   \Bigr\rangle_{K}.
\]
Define the representer
\[
  g_{q}^{(\gamma)}(x)
  ={q(x)^{\gamma}}
     \Bigl[(\gamma+1)\,s_q(x)K_x+\nabla_x K_x\Bigr],
  \qquad
  \Psi=\mathbb{E}_{X\!\sim p}\bigl[g_{q}^{(\gamma)}(X)\bigr].
\]
Then,
$\mathbb{E}_{p}\!\bigl[\mathcal{A}_{q}^{(\gamma)}f\bigr]=\langle f,\Psi\rangle_{K}$, and hence, by Cauchy-Schwarz in $\mathcal{H}_K^{d}$,
\[
  S^{(\gamma)}_K(p,q)=\lVert\Psi\rVert_{K}.
\]
Squaring and expanding,
\[
  \Bigl(\sup_{f\in\mathcal{B}_K}
      \bigl|\mathbb{E}_{p}[\mathcal{A}_{q}^{(\gamma)}f]\bigr|\Bigr)^{2}
  =\langle \Psi,\Psi\rangle_{K}
  =\iint p(x)q(x)^{\gamma}\,p(x')q(x')^{\gamma}\,
        \langle g_{q}^{(\gamma)}(x),g_{q}^{(\gamma)}(x')\rangle_{K}\,dx\,dx'.
\]
Using the reproducing identities,
\begin{align}\nonumber
  \langle g_{q}^{(\gamma)}(x),g_{q}^{(\gamma)}(x')\rangle_{K}
  = &  
       (\gamma+1)^2 s_q(x)^{\top}K(x,x')s_q(x')
       +(\gamma+1)s_q(x)^{\top}\nabla_{x'}K(x,x')\\\nonumber
     &  +(\gamma+1)s_q(x')^{\top}\nabla_{x}K(x,x')
       +\operatorname{tr}\bigl(\nabla_{x,x'}^{2}K(x,x')\bigr),
\end{align}
which is $u_{q,K}^{(\gamma)}(x,x').$
This concludes \eqref{concl}.
\end{proof}

Given a dataset $\{x_i\}_{i=1}^n$ drawn from $p$, we can construct an unbiased U-statistic estimator for the squared discrepancy:
\begin{align}\nonumber
\hat{S}_{K,\gamma}^2 = \frac{1}{n(n-1)} \sum_{i \neq j} u(x_i)^\gamma u(x_j)^\gamma u_{q,K}^{(\gamma)}(x_i,x_j).
\end{align} 
This statistic can be computed without knowing the normalizing constant of $q$. Under $H_0$, its value will be close to zero; under $H_1$, under alternatives for which the discrepancy is nonzero, the statistic tends to take larger values.
By comparing $\hat{S}_{K,\gamma}^2$ to a critical value (obtained via bootstrap methods), we can perform the GoF test.

Theorem \ref{Stein} applies to the case where the test functions $f$ range over the unit ball of $L^{2}(\mu_{\gamma})$.
To kernelize the construction we replace that Banach space by a
\emph{reproducing kernel Hilbert space}
$(\mathcal{H}_{K},\langle\!\langle\cdot,\cdot\rangle\!\rangle_{K})$
with positive‑definite kernel
$K:\mathbb{R}^{d}\times\mathbb{R}^{d}\to\mathbb{R}$.
Because $\mathcal{H}_{K}\subset L^{2}_{\mathrm{loc}}$ whenever $K$ is
bounded, the $\gamma$‑Stein operator
\(
  \mathcal{A}_{q}^{(\gamma)}
\)
is well defined on vector‑valued members of $\mathcal{H}_{K}^{d}$.
 Setting $\gamma=0$ recovers the classical KSD.

%\subsubsection{Statistic and bootstrap calibration}\subsubsection{Numerical study}
 {To examine the finite-sample behavior of the proposed $\gamma$-KSD statistic, we conduct a simulation study for robust power. The aim is not to test a pure null distribution against arbitrary contamination, but to detect a small mean shift in the main data component when the same type of outlier contamination is present under both null and alternative hypotheses.}
\begin{itemize}
    \item   {Target main component:}  {The uncontaminated target model used in the Stein operator is $q(x)=\mathcal{N}(x\mid 0,\mathrm I_2)$.}
    \item   {Contaminated null hypothesis:}  {The observed data are generated from}
    { 
    \[
    P_0=(1-\varepsilon)\mathcal{N}(0,\mathrm I_2)+\varepsilon C,
    \]
    }
     {where $C=\mathcal{N}([5,5]^\top,\mathrm I_2)$ is a known simulation contaminating distribution and $\varepsilon=0.10$.}
    \item   {Contaminated alternative hypothesis:}  {The contaminating component is unchanged, but the main component is shifted:}
    { 
    \[
    P_\delta=(1-\varepsilon)\mathcal{N}([\delta,\delta]^\top,\mathrm I_2)+\varepsilon C.
    \]
    }
\end{itemize}
 {For this experiment, we set $n=200$ and the significance level $\alpha=0.05$. We estimate power for different shift magnitudes $\delta$ across 500 Monte Carlo replications. The critical value for each statistic is obtained by bootstrap calibration under the contaminated null $P_0$. Thus, at $\delta=0$ the rejection probability estimates the type-I error for the contaminated-null test. The empirical power of the standard KSD ($\gamma=0$) and the robust $\gamma$-KSD ($\gamma=0.3,0.5$) is presented in Table~\ref{tab:robust_power}.}

\begin{table}[htbp]
\centering
\caption{Estimated test power to detect a mean shift $\delta$ in the presence of 10\% contamination. The power for $\delta=0$ corresponds to the empirical Type I error rate.}
%*{3mm}
\label{tab:robust_power}
\bigskip
\begin{tabular}{@{}cccc@{}}
\toprule
 {Shift $\delta$} &  {KSD ($\gamma=0.0$)} &  {$\gamma$-KSD ($\gamma=0.3$)} &  {$\gamma$-KSD ($\gamma=0.5$)} \\
\midrule
0.00 & 0.052 & 0.048 & 0.054 \\
0.20 & 0.048 & 0.160 & 0.224 \\
0.40 & 0.058 & 0.552 & 0.710 \\
0.60 & 0.050 & 0.906 & 0.978 \\
0.80 & 0.044 & 0.998 & 1.000 \\
\bottomrule
\end{tabular}
\end{table}

 {The results show the behavior expected from the contaminated-null interpretation.}
\begin{itemize}
    \item   {Type-I error control: For $\delta=0$, all statistics are close to the nominal level $\alpha=0.05$, indicating that the bootstrap calibration under $P_0$ is adequate in this experiment.}
    \item   {Power under contaminated calibration: The standard KSD statistic has low power for this design because the outlying component contributes strongly to the unweighted Stein discrepancy and masks the small shift in the main component.}
    \item  {The $\gamma$-KSD statistics show increasing power as $\delta$ grows. The factor $q(x)^\gamma$ down-weights observations in the low target-density outlier region and makes the statistic more sensitive to the shift of the main data cloud.}
\end{itemize}
 {This experiment should therefore be read as evidence for robust detection of a bulk-distribution change under a specified contamination design, not as a claim that all outliers can be ignored in a classical GoF test.}
The anchored $\gamma_0$-KSD was already employed in Section \ref{sub35} as a robust cross-validation validator for selecting $\gamma$. The present section supplies the general definition and closed-form expression.
The test has a computational cost of $O(n^2)$, same as standard KSD, and requires no knowledge of the normalizing constant of $p$.
Under $H_0$, the U-statistic is degenerate and its distribution converges to an infinite weighted sum of $\chi^2$ random variables, $n\,\hat  S_{K,\gamma}^{2} \xrightarrow{d} \sum_{\ell=1}^{\infty}\lambda_\ell\,Z_\ell^{2}$. Critical values can be obtained via bootstrapping.
 {The weighting scheme $w(x)=q(x)^\gamma$ can improve robustness to outliers compared with standard KSD ($\gamma=0$), but it may also reduce sensitivity to alternatives concentrated in low target-density regions. This limitation is one reason why calibration and the choice of $\gamma$ should be tied to the intended diagnostic task.}

\subsection{\texorpdfstring{$\gamma$}{gamma}-variational inference}\label{sub42}
Many modern methods for variational inference aim to approximate a complex target probability density, $q(x)$, which may be difficult to sample from directly. Instead of finding an analytical form for an approximation, particle-based methods  use a set of $N$ points, or  particles  $\{x_i\}_{i=1}^N$, to represent the distribution.
More precisely, $q(x)$ is often assumed to be $u(x)/Z$ with a tractable unnormalized function $u$ and the intractable normalizing constant $Z$. 
The goal is to iteratively move these particles through the space so that their empirical distribution gradually transforms to match the target $q(x)$. Imagine the particles as a cloud of points; we want to "steer" this cloud until its shape matches the landscape of $q(x)$.
This process can be viewed as a  controlled diffusion. We define a  velocity field , $\phi(x)$, which is a function that tells each particle where to move next. The challenge is to find the optimal velocity field---the one that causes the particle cloud $\hat{p}_t$ to flow towards the target $q(x)$ as efficiently as possible. Efficiency is measured by the steepest descent on the  KL divergence, $D_{\mathrm {KL}}(p\|\hat{p}_t )$, where $\hat p_t$ denotes a kernel density estimate of the empirical particle measure.
This quantifies the distance between the two distributions.

SVGD constructs a velocity field that moves particles toward high-probability
regions of \(q\) while maintaining dispersion among particles.
The velocity field has two components:
\begin{align}\nonumber
x_i \leftarrow x_i + \varepsilon \phi^*(x_i),
\end{align} 
where $\varepsilon$ is a step size and $\phi^*(x_i)$ is the velocity at the particle's location. The brilliance of SVGD lies in its velocity field, which has two essential components:
\begin{align}\label{eq6}
\phi^*(x) = \mathbb{E}_{X \sim \hat{p}_t} \left[ K(X, x) s_q(X) + \nabla_X K(X, x) \right].
\end{align} 
Here $K(X, x) s_q(X)$ uses the score of the target density, $s_q(X) = \nabla_X \log q(X)$, which equals $\nabla_X \log u(X)$. This term pushes the particles in the direction of increasing log-probability, acting like a standard gradient ascent.
Alternatively, $\nabla_X K(X, x)$ uses the gradient of the kernel function, $K$. This term makes the particles interact and repel each other, preventing them from all collapsing to the same point and encouraging them to cover the full breadth of the target distribution.
While effective, standard SVGD can be sensitive to outliers or errant particles, as the score function $s_q(x)$ can be very large in the tails of the distribution, leading to unstable updates.

A straightforward way to make this process more robust is to introduce a  weighting scheme  into the velocity field, which can be achieved by leveraging the structure of the $\gamma$-Stein operator. We can define a modified, robust velocity field, $\phi^*_\gamma$, as:
\begin{align}\nonumber
\phi_{\gamma}^*(x) = \mathbb{E}_{X \sim \hat{p}_t} \left[ \mathcal{A}_q^{(\gamma)} K(X, x) \right].
\end{align} 
Spelled out, this becomes:
\begin{align}\nonumber
\phi_{\gamma}^*(x) = \frac{1}{N}\sum_{j=1}^{N} u(x_j)^{\gamma}\left[(\gamma+1)K(x_j,x)s_q(x_j)+\nabla_{x_j}K(x_j,x)\right],
\end{align} 
 {where the weight $q(x_j)^\gamma$ is replaced by $u(x_j)^\gamma$, absorbing the common factor $Z^{-\gamma}$ into the step size. The key modification is therefore the target-density weight $u(x_j)^\gamma$: particles lying in low target-density regions have a smaller contribution to the velocity field. This mirrors the classical SVGD update but replaces the KL descent field by a $\gamma$-weighted Stein field.}

 {If a particle $x_j$ is far in the tail of the target distribution, then $u(x_j)^\gamma$ is small for $\gamma>0$, and its influence on the update of the other particles is damped. This leads to a more stable flow under errant particles or contaminated posterior geometry.}
As the robustness parameter $\gamma \to 0$, these weights approach 1, and the method recovers the standard SVGD algorithm.
This general principle can also be applied to other related frameworks, such as evolution strategies, to create robust, gradient-free optimizers.

\subsubsection*{Numerical illustration}

We compare two transport targets for a Poisson log-linear regression with an intercept and $d=6$ standardized covariates.
Let $z_i=x_i^\top\alpha$ and $\mu_i=\exp(z_i)$.

\begin{enumerate}
\item {Standard SVGD ($\gamma=0$).}
The Bayesian posterior $p(\alpha\mid X,y)\propto u(\alpha)$ has unnormalized log-density
\[
\log u(\alpha)
=\sum_{i=1}^n\bigl(y_i\,z_i-\exp(z_i)\bigr)-\frac{1}{2s_0}\,\|\alpha\|^2 + C,
\]
and is approximated with the standard SVGD field \eqref{eq6} with $\gamma=0$.

\item {Robust $\gamma$-SVGD ($\gamma>0$).}
The target $\pi_\gamma(\alpha)$ is induced by the $\gamma$ divergence  loss
\[
L_\gamma(\alpha)
=-\frac{1}{n}\,\frac{1}{\gamma}\sum_{i=1}^n
\exp\!\Bigl\{\gamma\,y_i z_i-\frac{\gamma}{\gamma+1}\exp\!\bigl((\gamma+1)z_i\bigr)\Bigr\},
\]
see Section~2.7 in \citet{eguchi2024minimum} for the loss under the Poisson regression model.
We adopt a numerically stable \emph{log-sum-exp} surrogate for the likelihood part of $\log\pi_\gamma$
and use the corresponding $\gamma$-SVGD transport.  Transport weights over particles
use $\mathrm{softmax}(\gamma\log u)$ and are annealed from $0$ to the target~$\gamma$.
\end{enumerate}

\subsubsection*{Experimental design}
 {We simulate $n=400$ training pairs from a Poisson log-linear model and then contaminate either the covariates, the responses, or both. Covariate contamination at rate $\epsilon_x=0.10$ creates leverage points by replacing selected covariate rows with large outlying values. Outcome contamination at rate $\epsilon_y=0.10$ creates count spikes by inflating selected responses. The mixed scenario applies both mechanisms. A separate clean test set ($n=1500$) evaluates prediction, so the reported RMSE measures recovery of the underlying clean predictive mean rather than fit to the contaminated observations.}
We run SVGD with $M=32$ particles, $T=220$ iterations, RBF kernel (median heuristic, small jitter),
RMSProp preconditioning, step backtracking, and L2 projection.  To mitigate leverage,
we use a split normal distribution prior with larger variance on the intercept and stronger shrinkage on slopes.
We consider $\gamma\in\{0.00,0.02,0.05,0.08,0.10\}$.

\subsubsection*{Metrics and model selection}
We report posterior-predictive \emph{RMSE} of $\hat\mu$ on the clean test set (lower is better),
as mean~$\pm$ standard error over $R$ replicates.
Let $\{\alpha^{(m)}\}_{m=1}^M$ denote the SVGD particles for the clean test set  $\{(x_i^\ast,y_i^\ast)\}_{i=1}^{n_\ast}$.
We then average across particles to obtain the posterior predictive mean
\[
\hat\mu_i=\frac{1}{M}\sum_{m=1}^M \mu_i^{(m)},
\]
where \(\mu_i^{(m)}=\exp( x_i^{\ast\top}\alpha^{(m)})\).
For a fixed robustness level $\gamma$, 
over $R$ independent replicates we report the mean and standard error for the root mean square error (RMSE),
\[
\overline{\mathrm{RMSE}}_\gamma=\frac{1}{R}\sum_{r=1}^R\mathrm{RMSE}_\gamma^{(r)},
\qquad
\mathrm{SE}_\gamma=\frac{\mathrm{sd}\bigl(\mathrm{RMSE}_\gamma^{(1)},\ldots,\mathrm{RMSE}_\gamma^{(R)}\bigr)}{\sqrt{R}},
\]
where $\mathrm{RMSE}_\gamma^{(r)}$ is RMSE on the test set, $\{\mu_i^{(r)},y_i^{*(r)}\}$.
The robustness level is chosen by the \emph{one-SE rule}:
among $\gamma$ whose mean RMSE is within one standard error of the empirical minimum,
we select the smallest~$\gamma$.

\subsubsection*{Results}
Table~\ref{tab:rmse} summarizes the RMSE (mean~$\pm$~s.e.) across scenarios
(clean; $Y$-contamination; $X$-contamination; mixed $X{+}Y$).
Figure~\ref{fig:rmse_panels} shows RMSE versus~$\gamma$ with error bars.
In brief:  {(i)} under clean data, $\gamma=0$ is optimal (no robustness tax);
 {(ii)} under outcome contamination, a moderate robustness level $\gamma\approx0.10$ yields the best accuracy;
 {(iii)} under covariate or mixed contamination, a light robustness level $\gamma\approx0.02$ performs best.
These findings support the use of a small default robustness ($\gamma\simeq0.02$),
escalated to $\gamma\simeq0.10$ when heavy right-tail anomalies in counts are suspected.
 {The fact that the selected $\gamma$ is smaller in the mixed $X+Y$ contamination case than in the pure $Y$-contamination case is not paradoxical. Response spikes mainly create tail observations in the likelihood contribution, for which stronger density weighting is beneficial. Leverage contamination in $X$, however, also changes the geometry of the score and can make an overly large $\gamma$ concentrate the particle flow too strongly around high-density regions, leading to under-adaptation for prediction. A light positive value such as $\gamma=0.02$ gives enough damping to stabilize the flow while avoiding excessive loss of information.}

% ---- RMSE table (paste your generated numbers) ----
%\vspace{-3mm}
\begin{table}[!h]
\centering
\caption{Posterior predictive RMSE (mean $\pm$ s.e.) over $R$ replicates.
Bold indicates the one-SE rule selection in each scenario (ties broken by smaller $\gamma$).}
\label{tab:rmse}
%\vspace{1mm}
\bigskip
\begin{tabular}{lcccc}
\toprule
$\gamma$ & clean & Y-contam & X-contam & X+Y-contam \\
\midrule
0.00 & \textbf{${\bf18.567} \pm 1.914$} & $27.159 \pm 4.283$ & $27.693 \pm 4.024$ & $32.321 \pm 4.068$ \\
0.02 & $23.649 \pm 3.075$ & $21.637 \pm 1.737$ & {${\bf21.306} \pm 2.089$} & {${\bf20.179} \pm 2.334$} \\
0.05 & $28.691 \pm 3.746$ & $21.368 \pm 2.153$ & $23.490 \pm 2.427$ & $24.558 \pm 2.662$ \\
0.08 & $29.739 \pm 2.768$ & $28.219 \pm 3.089$ & $27.225 \pm 2.834$ & $27.612 \pm 3.348$ \\
0.10 & $23.228 \pm 3.660$ & ${\bf18.341} \pm 2.215$ & $25.872 \pm 3.573$ & $24.405 \pm 3.399$ \\
\bottomrule
\end{tabular}
\end{table}

%\vspace{-3mm}
\begin{figure}[H]
\centering
\includegraphics[width=0.90\linewidth]{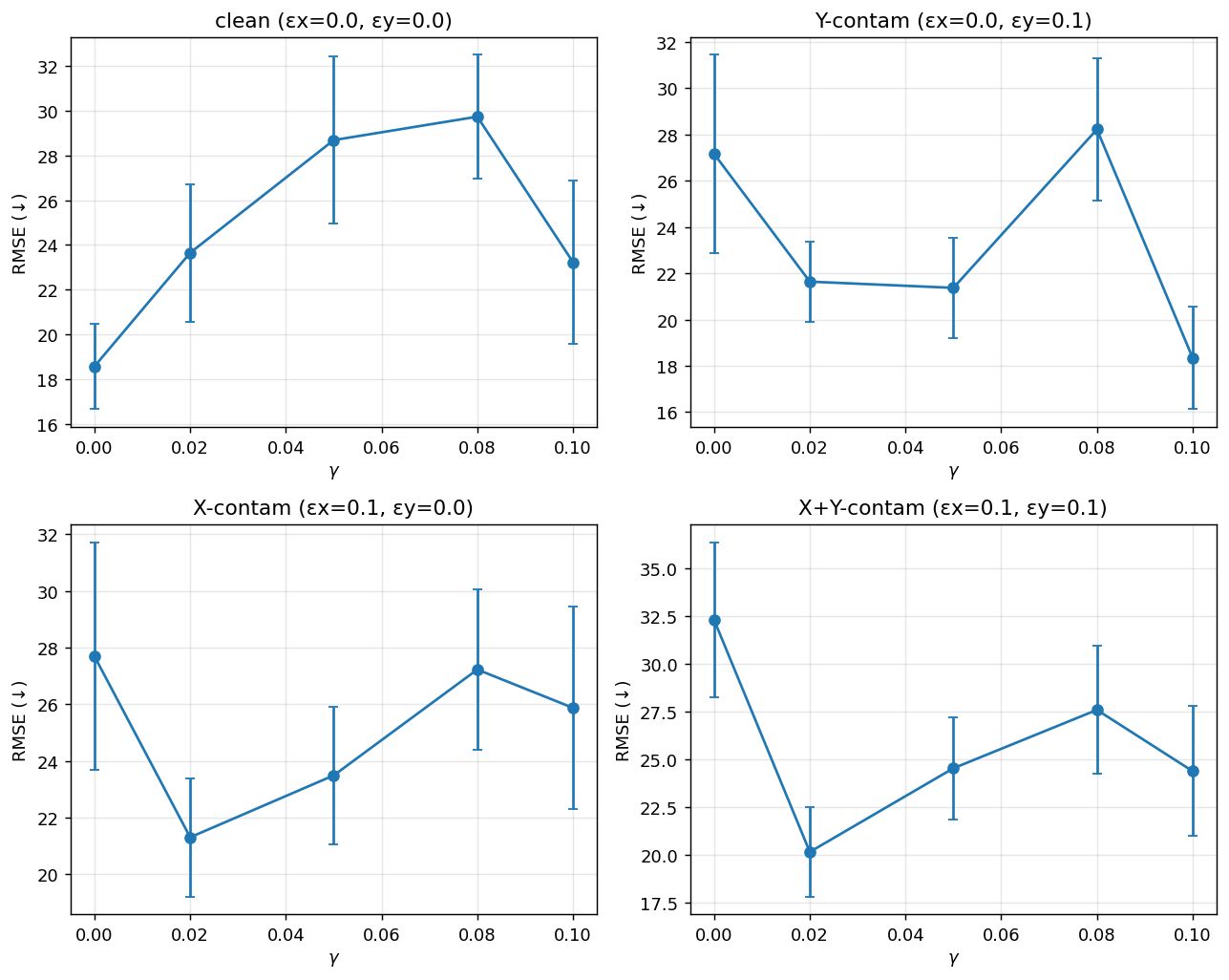}
\caption{Posterior-predictive RMSE versus $\gamma$ under four scenarios
(clean; $Y$-contamination; $X$-contamination; mixed $X{+}Y$).
Error bars show $\pm$ one standard error over replicates.}
\label{fig:rmse_panels}
\end{figure}

\section{Discussion}

We have formulated a density-power weighted version of the Stein operator.
The resulting operator weights the classical Stein field by \(q^\gamma\), which
leads to estimating equations and discrepancies that can be evaluated for
unnormalized models.  The transport-variation identity relates this construction
to the first variation of the \(\gamma\)-divergence.

Classical Stein operators perform well when data and model are well-aligned, but they can be brittle under contamination. The $\gamma$-weight introduces a controlled insensitivity to low-density regions: inliers continue to shape the fit, while outliers exert a much weaker influence. Conceptually, the method combines two forces already present in Stein flows--the ascent along the score and the repulsive spreading--then modulates both by $q^\gamma$. The result is a flow that concentrates learning effort where the model believes the signal lives.

 {A naïve weighted Fisher objective would involve the unknown 
$s_p$, making it impractical.  The variational view clarifies the origin of 
the operator: for the logarithmic $\gamma$-divergence, 
Proposition~\ref{gamma-div} expresses
\(\mathcal A_q^{(\gamma)}\) directly as the first variation of the
logarithmic \(\gamma\)-divergence along the escort transport.
Geometrically, $\mathcal{A}_q^{(\gamma)}$ is therefore the 
canonical Stein operator for the $(\gamma+1)$-escort coordinate of 
$D_\gamma$, not a deformation of the classical Stein operator.  The 
estimating equations themselves rely on the zero-expectation Stein 
identity, which is why they can be implemented without knowing the 
normalizing constant.  Thus, the divergence calculus motivates the operator 
through the escort transport, while the Stein identity supplies the usable 
estimating equation.}

The $\gamma$-Stein method has the following performance:
\begin{itemize}
\item \emph{Unnormalized models.} Estimating equations depend only on $\nabla_x\log u_\theta$ and the weight $u_\theta^\gamma$; the partition function cancels. 
 {This is useful for energy-based models, random field models, and other settings
where likelihoods are expensive or intractable.
\item \emph{Tuning $\gamma$.} Small positive values can provide a useful robustness--efficiency compromise, but the appropriate range depends on the model, the contamination mechanism, and the inferential task. Larger values emphasize outlier resistance at the cost of variance and possible loss of sensitivity to low-density alternatives.}
\item \emph{Algorithms.} Replacing the standard Stein field with its $\gamma$-weighted version yields robust particle methods (e.g., $\gamma$-SVGD) and robust discrepancies (e.g., $\gamma$-KSD) without changing the surrounding optimization scaffolding.
\end{itemize}

Let us overview a relation to existing robustness tools.
The method is philosophically close to density-power approaches that temper the likelihood. The difference is structural: here, robustness appears at the level of the Stein operator and its induced flow, tied to a transport derivative of a divergence. This yields (i) a direct route to score-matching-type estimators, (ii) natural compatibility with unnormalized models, and (iii) operator identities that extend to kernelized discrepancies and particle methods.

The $\gamma$-Stein machinery is most useful when one expects a small fraction of gross errors or heavy tails and wishes to preserve the convenience of score-based learning. When contamination is negligible and the model is nearly correct, $\gamma=0$ recovers the familiar Fisher/Stein landscape and is statistically most efficient. In high-noise, high-dimension regimes, modest $\gamma>0$ can stabilize estimation and improve out-of-sample behavior.
On the other hand, the $\gamma$-Stein method has the following limitations as a statistical procedure.
First, robustness trades efficiency: if $\gamma$ is too large, variance inflates and modes with low model mass may be under-explored. Second, the mixed measure $\mu_\gamma=p\,q^\gamma dx$ couples data and model in ways that complicate analysis under severe misspecification (e.g., overly diffuse $q$). Third, kernel and feature choices in $\gamma$-KSD and particle implementations remain important in high dimensions. These limitations point to the need for principled tuning and adaptivity.

Here are directions for a further development for the $\gamma$-Stein approach:
\begin{itemize}
\item \emph{Adaptive weighting.} Data- or iteration-dependent $\gamma$ (or spatially varying $\gamma(x)$) that remains scale-invariant for unnormalized targets.
\item \emph{General weights $w(q)$.} Beyond power laws, which weights preserve key invariances and yield tractable calculus? The scale-invariance argument narrows the field, but structured relaxations may be possible.
\item \emph{Theory under misspecification.} Non-asymptotic guarantees for $\gamma$-KSD testing and rates for $\gamma$-score matching with heavy tails or leverage points.
\item \emph{Manifold and discrete spaces.} Extending $\gamma$-Stein identities to Riemannian settings and to discrete models where IBP is replaced by summation-by-parts operators.
\item \emph{Applications.} Robust training of energy-based deep models, stable posterior transport in variational inference, and scientific domains where outliers or heavy-tailed observations are common (e.g., ecology, genomics, remote sensing).
\end{itemize}

In summary, the $\gamma$-Stein operator gives an operator-level representation
of density-power weighting in Stein-based inference.  This viewpoint connects
robust weighting, transport variation, and score-based computation, while
retaining the advantage that the normalizing constant is not required.

\section*{Data and code availability}
 {The Python notebooks used to reproduce the simulation studies  are available at \url{https://github.com/shinto-eguchi/Gamma-Stein}.}

\section*{Acknowledgements}
 {
The author thanks the editor and the reviewers for their careful reading and
constructive comments, which substantially improved the presentation of the
paper.  The author also thanks colleagues for helpful discussions on robust
divergence-based inference and Stein-type methods.
}

\bibliographystyle{plainnat}
\bibliography{references}

\newpage
\section*{Appendix}
%\subsubsection*{A1: Proof for the $\gamma$-divergence}

\appendix
\subsection*{Estimating functions for the normal mixture model}\label{Appendix-A}
\label{app:nmm}

{\color{black}
Throughout this appendix we write
\[
p_\theta(x)=\sum_{j=1}^{J}\pi_j\phi_j(x),\qquad
\phi_j(x)=\phi(x;\mu_j,\Sigma_j),\qquad
\Lambda_j=\Sigma_j^{-1},
\]
\[
s_j(x)=\nabla_x\log\phi_j(x)=\Lambda_j(\mu_j-x),\qquad
r_j(x)=\frac{\pi_j\phi_j(x)}{p_\theta(x)},\qquad
s_\theta(x)=\sum_{k=1}^{J}r_k(x)s_k(x),
\]
and we assume the boundary conditions under which
$\int\nabla_x\!\cdot\!\{p_\theta(x)^{\gamma+1}f(x)\}dx=0$ holds for the test
fields used below; this is satisfied because every field considered here grows
at most polynomially while $p_\theta^{\gamma+1}$ decays at a Gaussian rate.

Since $\log r_j=\log\pi_j+\log\phi_j-\log p_\theta$, we have
\begin{align}\label{a:gradr}
\nabla_x r_j(x)=r_j(x)\bigl\{s_j(x)-s_\theta(x)\bigr\},
\end{align}
and, because $s_j$ is affine in $x$,
\begin{align}\label{a:divs}
\nabla_x\!\cdot s_j(x)=-\operatorname{tr}\Lambda_j,
\qquad
\nabla_x\!\cdot\bigl\{H(\mu_j-x)\bigr\}=-\operatorname{tr}H,
\qquad
\nabla_x\!\cdot v=0
\end{align}
for a constant matrix $H$ and a constant vector $v$.

\begin{lemma}[component-tilted representation]\label{lem:master}
For any differentiable field $g:\mathbb R^{d}\to\mathbb R^{d}$, let
$\widetilde p_j(x)=\pi_j\phi_j(x)p_\theta(x)^\gamma$.  Then
\begin{align}\notag
\mathcal A^{(\gamma)}_{p_\theta}\bigl(r_j\,g\bigr)(x)
&=\frac{\nabla_x\!\cdot\!\{\widetilde p_j(x)g(x)\}}{p_\theta(x)}\\[2mm]
&=p_\theta(x)^\gamma r_j(x)\,
\mathcal A_{\widetilde p_j}g(x)\\[2mm]
&=p_\theta(x)^{\gamma}r_j(x)
\Bigl\{\bigl\langle s_j(x)+\gamma s_\theta(x),\,g(x)\bigr\rangle
+\nabla_x\!\cdot g(x)\Bigr\}.
\label{a:master}
\end{align}
Here $\mathcal A_{\widetilde p_j}$ is the ordinary Stein operator associated
with the possibly unnormalized tilted density $\widetilde p_j$.
\end{lemma}
\begin{proof}
By definition
$\mathcal A^{(\gamma)}_{p_\theta}f
=p_\theta^{\gamma}\{(\gamma+1)\langle s_\theta,f\rangle+\nabla_x\!\cdot f\}$.
Putting $f=r_jg$ and using the product rule together with \eqref{a:gradr},
\[
\nabla_x\!\cdot(r_jg)=\langle\nabla_x r_j,g\rangle+r_j\nabla_x\!\cdot g
=r_j\bigl\{\langle s_j-s_\theta,g\rangle+\nabla_x\!\cdot g\bigr\},
\]
so that
\[
\mathcal A^{(\gamma)}_{p_\theta}(r_jg)
=p_\theta^{\gamma}r_j\bigl\{\langle(\gamma+1)s_\theta+s_j-s_\theta,\,g\rangle
+\nabla_x\!\cdot g\bigr\}
=p_\theta^{\gamma}r_j\bigl\{\langle s_j+\gamma s_\theta,g\rangle
+\nabla_x\!\cdot g\bigr\}.
\]
The first equality in \eqref{a:master} follows from
$s_j+\gamma s_\theta=\nabla_x\log(\pi_j\phi_j p_\theta^{\gamma})$ and
$p_\theta^{\gamma}r_j=\pi_j\phi_j p_\theta^{\gamma}/p_\theta$.
\end{proof}

Lemma~\ref{lem:master} says that, for fields of the form $r_jg$, the
$\gamma$-Stein operator of the mixture equals the ordinary Stein operator of
the tilted density $\widetilde p_j=\pi_j\phi_jp_\theta^\gamma$, multiplied by
the density ratio $\widetilde p_j/p_\theta=p_\theta^\gamma r_j$.  In
particular, the unbiasedness
$\mathbb E_{p_\theta}[\mathcal A^{(\gamma)}_{p_\theta}(r_jg)]
=\int\nabla_x\!\cdot(\pi_j\phi_jp_\theta^{\gamma}g)\,dx=0$
is immediate.  The mean and precision blocks below follow by choosing $g$; the mixing-proportion block is obtained directly from the definition.

\paragraph{Mixing proportions.}
The field is $f^{(\pi_j)}=s_j$, which does not carry the factor $r_j$, so we use
the definition directly.  By \eqref{a:divs},
\[
U^{(\pi_j)}_\gamma(\theta,x)
=\mathcal A^{(\gamma)}_{p_\theta}(s_j)(x)
=p_\theta(x)^{\gamma}\Bigl\{(\gamma+1)\langle s_\theta(x),s_j(x)\rangle
-\operatorname{tr}\Lambda_j\Bigr\}.
\]

\paragraph{Component means.}
Let $v\in\mathbb R^{d}$ be a perturbation direction of $\mu_j$ and take
$g\equiv v$.  Then $\nabla_x\!\cdot v=0$ and Lemma \ref{lem:master} gives
\[
U^{(\mu_j)}_\gamma(\theta,x)[v]
=p_\theta(x)^{\gamma}r_j(x)\bigl\langle s_j(x)+\gamma s_\theta(x),\,v\bigr\rangle ,
\]
that is, in vector form,
\[
U^{(\mu_j)}_\gamma(\theta,x)
=p_\theta(x)^{\gamma}r_j(x)\bigl\{s_j(x)+\gamma s_\theta(x)\bigr\}.
\]

\paragraph{Precision matrices.}
Let $H$ be a symmetric perturbation direction of $\Lambda_j$ and take
$g(x)=H(\mu_j-x)$, so that $\nabla_x\!\cdot g=-\operatorname{tr}H$ by
\eqref{a:divs}.  Lemma \ref{lem:master} gives
\[
U^{(\Lambda_j)}_\gamma(\theta,x)[H]
=p_\theta(x)^{\gamma}r_j(x)
\Bigl\{\bigl(s_j(x)+\gamma s_\theta(x)\bigr)^{\!\top}H(\mu_j-x)
-\operatorname{tr}H\Bigr\}.
\]
Because $\langle A,H\rangle_F=\operatorname{tr}(A^\top H)$ and
$\operatorname{tr}\bigl\{(s_j+\gamma s_\theta)(\mu_j-x)^\top H\bigr\}
=(s_j+\gamma s_\theta)^\top H(\mu_j-x)$ for symmetric $H$, this linear
functional is represented by the matrix
\[
U^{(\Lambda_j)}_{\gamma,\mathrm{mat}}(\theta,x)
=p_\theta(x)^{\gamma}r_j(x)\,
\mathrm{sym}\Bigl\{(\mu_j-x)\bigl(s_j(x)+\gamma s_\theta(x)\bigr)^{\!\top}
-\mathrm I_d\Bigr\},
\]
in the sense that
$\langle U^{(\Lambda_j)}_{\gamma,\mathrm{mat}},H\rangle_F
=U^{(\Lambda_j)}_\gamma[H]$ for every symmetric $H$.
}

%\subsubsection*{A2: Proof for the $\beta$-divergence}

\end{document}